\newcommand{\rom}[1]{\uppercase\expandafter{\romannumeral #1\relax}}
\numberwithin{equation}{section}
\newtheorem{theorem}{Theorem}[section]
\newtheorem{lemma}[theorem]{Lemma}
\newtheorem{proposition}[theorem]{Proposition}
\newtheorem{remark}[theorem]{Remark}
\def\thetheorem {{\arabic{section}.\arabic{theorem}}}
\newcommand{\argmin}{\operatornamewithlimits{argmin}}
\newcommand{\argmax}{\operatornamewithlimits{argmax}}
\def\a{\alpha}
\def\b{\beta}
\def\e{\epsilon}
\def\g{\gamma}
\def\lam{\lambda}
\def\o{\omega}
\def\s{\sigma}
\def\th{\theta}
\def\th{\theta}
\def\D{\Delta}
\def\A{\mathbb A}
\def\E{\mathbb E}
\def\R{\mathbb R}
\def\S{\mathbb S}
\def\mM{\mathcal M}
\def\mN{\mathcal N}
\def\l{\left}
\def\r{\right}
\def\la{\left\langle}
\def\ra{\right\rangle}
\def\ll{\left\lVert}
\def\rl{\right\rVert}
\def\lv{\left\lvert}
\def\rv{\right\rvert}
\def\pt{\partial}
\def\nb{\nabla}
\def\ds{\displaystyle}
\def\h{\hat}
\def\t{\tilde}
\def\Vinf{V^\infty}
\def\piinf{\pi^\infty}
\def\eps{\e}
\def\H{\mathcal{H}}
\def\tcb{\textcolor{black}}
\title{Variational Actor-Critic Algorithms}
\author{Yuhua Zhu\thanks{Department of Mathematics and Halicioğlu Data Science Institute, University of California, San Diego, La Jolla, California, U.S.A; e-mail: yuz244@ucsd.edu}
\and Lexing Ying \thanks{Department of Mathematics, Stanford University, Stanford, California, U.S.A; e-mail: lexing@stanford.edu}}
\date{}
\begin{document}

%
\maketitle

\begin{abstract}
  We introduce a class of variational actor-critic algorithms based on a variational formulation
  over both the value function and the policy. The objective function of the variational formulation
  consists of two parts: one for maximizing the value function and the other for minimizing the
  Bellman residual. Besides the vanilla gradient descent with both the value function and
  the policy updates, we propose two variants, the clipping method and the flipping method, in order to
  speed up the convergence. We also prove that, when the prefactor of the Bellman residual is
  sufficiently large, the fixed point of the algorithm is close to the optimal policy.
\end{abstract}
\section{Introduction}


Consider a discounted Markov Decision Process (MDP) $\mM = (\S,\A, P, r, \g)$. Here $\S$ is the
state space and $\A$ is the action space. $\D(\S)$ and $\D(\A)$ denote the set of probability
distributions over $\S$ and $\A$, respectively. $P: \S\times\A \to \D(\S)$ is the transition kernel,
$r: \S\times \A \to \R$ is the reward function, and $\g\in(0,1)$ is the discounted factor. For each
state-action pair $(s,a)$, we denote by $P(s'|s,a)$ the transition probability from state $s$ to
state $s'$ given action $a$, $r(s,a)$ the immediate reward received at state $s$ with action $a$.

A policy $\pi:\S\to\D(\A)$ represents an action selection rule, where $\pi(a|s)$ specifies the
probability of taking action $a$ at state $s$.  The state value function $V^\pi(s)$ is the expected
discounted cumulative reward if one starts from an initial state $s$ and follows a policy $\pi$ with
step $t=0,1,\ldots$:
\begin{equation}\label{eq: def of Vpi}
  V^\pi(s) = \underset{\substack{a_t\sim\pi(\cdot|s_t)\\s_{t+1}\sim P(\cdot|s_t,a_t)}}{\E}\l[\sum_{t\geq0} \g^tr(s_t,a_t) | s_0 = s\r].
\end{equation}
The value function $V^\pi(s)$ also satisfies the Bellman equation \cite{sutton2018reinforcement},
\[
V^\pi(s) = \underset{\substack{a_t\sim\pi(\cdot|s_t)\\s_{t+1}\sim P(\cdot|s_t,a_t)}}{\E}
[r(s_t,a_t) + \g V^\pi(s_{t+1}) | s_t = s].
\]
The state-action value function $Q^\pi(s,a)$, often referred as the $Q$-function, is the expected
discounted cumulative reward if one takes action $a$ at initial state $s$: $Q^\pi(s,a) =
\E[\sum_{t\geq0} \g^tr(s_t,a_t) | s_0 = s, a_0=a]$. The two functions $V^\pi$ and
$Q^\pi$ are related in the sense that $V^\pi(s) = \E_{a\sim\pi(\cdot|s)} Q^\pi(s,a)$.

A primary goal of reinforcement learning (RL) is to learn the optimal policy $\pi^*$ and its
corresponding value function $V^*$. Among various approaches, the policy gradient methods have
experienced significant advances recently, for example see
\cite{williams1992simple,kakade2001natural,peters2008natural,konda2000actor,schulman2015trust,schulman2017proximal}.
From the optimization perspective, a policy gradient method optimizes the following
objective over policy $\pi$ with gradient updates
\begin{equation}\label{eq:pg}
  \begin{aligned}
    \max_\pi\quad &\E_{s\sim\rho} V^\pi(s)\\ \text{s.t.}\quad &V^\pi(s) =
    \underset{\substack{a_t\sim\pi(\cdot|s_t)\\s_{t+1}\sim P(\cdot|s_t,a_t)}}{\E} [r(s_t,a_t) + \g
      V^\pi(s_{t+1}) | s_t = s],
  \end{aligned}
\end{equation}
where $\rho(s)$ is a positive probability distribution. Policy gradient methods are often more
convenient than the value based methods in the settings of continuous action space, high dimensional
action space, and partially observed MDP \cite{degris2012model, silver2014deterministic,
  peters2006policy}.  It is also quite flexible to adopt various kinds of policy parameterizations
in the policy gradient methods, which makes them powerful for both stochastic policies
\cite{baxter2001infinite, sutton1999policy} and deterministic policies
\cite{silver2014deterministic, lillicrap2015continuous}.

For policy gradient methods, entropy regularization is often included because it improves
exploration by discouraging premature convergence to suboptimal deterministic policies \cite{peters2010relative, mnih2016asynchronous,williams1991function}.  More
specifically, entropy regularization takes for example the following regularized maximization
formulation:
\begin{equation}\label{eq: goal}
  \begin{aligned}
    \max_\pi\quad &\E_{s\sim\rho} V_\lam^\pi(s) \\
    \text{s.t.}\quad &V_\lam^\pi(s) = \underset{\substack{a_t\sim\pi(\cdot|s_t)\\s_{t+1}\sim P(\cdot|s_t,a_t)}}{\E} [r(s_t,a_t) - \lam \log\pi(a_t|s_t) + \g V^\pi_\lam(s_{t+1}) | s_t = s].
  \end{aligned}    
\end{equation}
Let $\pi^*_\lam$ be the regularized optimal policy for \eqref{eq: goal}. Note that $\pi^*_0 = \pi^*$
(the non-regularized optimal policy) when $\lambda=0$ but $\pi^*_\lam$ is different from $\pi^*$
when $\lambda>0$. In what follows, we shall abbreviate the optimal value functions
$V^{\pi^*_\lam}_\lam$ and $ V^{\pi^*}$ as $V_\lam^*$ and $V^*$, respectively.





The most direct way of solving the optimization problem \eqref{eq: goal} is to update the policy $\pi$
according to the gradient $\nb_\pi \E_{s\sim\rho} V_\lam^{\pi}(s)$.  The calculation of $\nb_\pi
\E_{s\sim\rho} V_\lam^{\pi}(s)$ however involves computing the exact value function $V_\lam^\pi(s)$
or $Q_\lam^\pi(s,a)$ under the current policy $\pi$. With an accurate approximation of the value
function $V_\lam^\pi(s)$, this gradient-based method can achieve a linear convergence rate
\cite{agarwal2020optimality,mei2020global,cen2020fast}.  However, the calculation of value function
$V_\lam^\pi(s)$ can be computationally intensive for large MDP problems. Especially in the
model-free setting, a large data set is often needed in order to achieve a good approximation
\cite{munos2008finite,liu2020finite}.

To avoid the explicit computation of $V_\lam^\pi(s)$, the actor-critic methods
\cite{konda2000actor} have been widely studied in the literature
\cite{mnih2016asynchronous,wang2016sample,wu2017scalable,haarnoja2018soft,fujimoto2018addressing} as
a way to update the policy and value function at the same time. \tcb{However, the convergence of the actor-critic algorithm is guaranteed only for two-timescale algorithms \cite{yang2019provably}, where a smaller stepsize is used for the actor updates and a larger stepsize is used for the critic updates. }
\tcb{The stabilities of the actor-critic algorithms are often sensitive to the choice of stepsizes \cite{islam2017reproducibility}. }



{\bf Contributions.}
In this paper, we propose a new actor-critic method based on a variational formulation over the
policy and the value function. 
Consider the optimization problem,
\begin{equation} \label{eq: obj1}
  \min_{V,\pi} \quad E(V,\pi)= \E_{s\sim\rho(s)} \l[ -V(s) + \frac{\beta}{2}\l(V(s)- E[r(s_t,a_t) +\g V(s_{t+1}) -\lam\log(\pi(a_t|s_t))|s_t = s]  \r)^2 \r],
\end{equation}
where $\rho\in\R^{|\S|}$ can be any positive probability distribution and $\b>0$ is a positive constant. The objective function \eqref{eq: obj1} consists two
parts: the first is to maximize the value function, while the second is to minimize the Bellman residual. The variational structure ensures that the vanilla gradient descent almost surely converges to a local minimum \tcb{without requirements on different stepsizes for $V$ and $\pi$ updates.}

\tcb{Besides, we pointed out that the vanilla gradient descent will lead to a direction increases $V^\pi_\lam$ at the initial stage because of the negative Bellman residuals.}
In order to improve the convergence speed of the vanilla gradient descent of \eqref{eq: obj1},
we further propose two variants. The first {\it clipping} method can be viewed as the gradient descent of
the objective function with a non-Euclidean metric. The second {\it flipping} method further
accelerates the convergence by continuously maximizing the value function in the right direction.

We prove that, when $\lam = 0$ and the prefactor $\b$ is sufficiently large, the fixed point of the
proposed algorithm is exactly the optimal policy $\pi^*$.  Furthermore, we prove that when $\lam>0$
i.e., in the regularized setting, the fixed point is close to the non-regularized optimal policy
$\pi^*$ for large $\b$ and small $\lam$.




{\bf Contents.}
The variational actor-critic algorithm is introduced in Section \ref{sec: algorithm}, where the
clipping and the flipping methods are first presented in the model-based setting first (Section
\ref{sec:modelbased}) and then in the model-free setting (Section \ref{sec: stochastic algo}). In
Section \ref{sec: thm}, we study the fixed point of the algorithm for both non-regularized (Section
\ref{sec: non_reg}) and the regularized (Section \ref{sec: reg}) objective functions. Several
numerical experiments are reported in Section \ref{sec: numerics} to demonstrate the performance of
the proposed algorithms.



\section{Variational Actor-Critic} \label{sec: algorithm}

Section \ref{sec:modelbased} presents the variational actor-critic algorithm in the model-based
setting, where the value function $V(s)$ is of the tabular form and the policy is parameterized with
the soft-max function. In Section \ref{sec: stochastic algo}, we introduce the stochastic variational 
actor-critic algorithm in the model-free setting, which applies to the general case of nonlinear 
approximation to the policies and $Q$-functions.

\subsection{Model-based setting} \label{sec:modelbased}
To simplify the discussions, we assume that both the state and action spaces are finite
discrete sets.
Consider the following minimization problem:
\begin{equation}\label{eq: objobj}
  \min_{V\in\R^{|\S|},\th\in\R^{|\S|\times|\A|}} \quad E(V,\pi)= -\rho^\top V + \frac{\beta}{2}\ll (I-\g P^{\pi_\th})V - r^{\pi_\th} + \lam \H(\pi_\th)\rl^2_\rho,
\end{equation}
where $\rho\in\R^{|\S|}$ is a positive probability distribution, and $\beta>0$ is a prefactor. The norm $\ll \cdot \rl_\rho$ is defined by $\ll x \rl^2_\rho:=
\sum_{s\in\S}x^2_s\rho_s$.  Here
$V=(V_s)_{s\in\S}$ in \eqref{eq: objobj} is an $|\S|$-dimensional vector, and the policy $\pi_\th=\{(\pi_\th)_{sa}\}_{s\in\S,
  a\in\A}$ is an $|\S|\times|\A|$ matrix. We assume the policy $\pi_\th$ is a soft-max function, i.e., for any pair $(s,a)\in\S\times \A$,
\[
(\pi_\th)_{sa} = \frac{e^{\th_{sa}}}{ \sum_{b\in\A} e^{\th_{sb}} }, \quad s\in\S, a\in\A. 
\]
Hereafter, we omit the subscript $\th$ of $\pi_\th$ for simplicity. The vector $r^\pi = (r^\pi_{s})_{s\in\S} \in \R^{|\S|}$ in \eqref{eq: objobj} is the reward under policy $\pi$ with the component $r^\pi_{s} = \sum_a r_{sa}\pi_{sa}$, where $r_{sa}$ is the immediate reward at $(s,a)$.   The matrix $P^\pi\in
\R^{|\S|\times|\S|}$ in \eqref{eq: objobj} is the transition matrix under policy $\pi$ with the
entry $P^\pi_{st}=\sum_{a}\pi_{sa}P^a_{st}$, where for each $a$, the matrix
$P^a\in\R^{|\S|\times|\S|}$ is the state transition matrix under action $a$. The vector
$\H(\pi) = (\H(\pi_s))_{s\in\S}\in\R^{|\S|}$ in \eqref{eq: objobj} is the entropy regularizer with the component $\H(\pi_s) = \sum_{a}\pi_{sa}\log\pi_{sa}$.

The minimization problem \eqref{eq: objobj} is a relaxation of the maximization problem of
$V^\pi_\lam$ as in \eqref{eq: goal}. Note that minimizing the first term $-\rho^\top V$ of the RHS
of \eqref{eq: objobj} has the same effect as maximizing $V$. On the other hand, the second term
$\ll(I-\g P^{\pi})V - r^\pi + \lam \H(\pi)\rl^2_\rho$ of the RHS of \eqref{eq: objobj} is the
norm of the Bellman residual. As $V^\pi_\lam = (I-\g P^{\pi})^{-1}(r^\pi-\lam\H(\pi_\th))$, the
minimization of the second term of the RHS of \eqref{eq: objobj} leads $V$ to the value function
$V^\pi_\lam$. Thus, combining the two terms of the RHS of \eqref{eq: objobj} yields that \eqref{eq:
  objobj} maximizes the true value function $V^\pi_\lam$ as in \eqref{eq: goal}.

One approach for solving the minimization problem \eqref{eq: objobj} is to update the $(V,\th)$ pair
following the gradients of the objective function. The gradients are given by
\begin{equation}\label{eq: obj}
  \begin{aligned}
    & \pt_{V_s}E          &=& -\rho_s +\beta(\ell_s\rho_s-\g \sum_tP_{ts}^\pi \ell_t\rho_t))                     &:=& G_{V_s}, \\
    & (F^+\nb_\th E)_{sa} &=&  \rho_s\beta \ell_s \l[- \g \sum_tP^a_{st}V_t - r_{sa} +\lam\log\pi_{sa}\r] + c_s &:=& G^{(0)}_{\th_{sa}},
  \end{aligned}    
\end{equation}
where $\ell(V,\pi)$ is an $|\S|$-dimensional function denoting the Bellman residual
\begin{equation}\label{def of L}
  \ell(V,\pi) = (I - \g P^\pi) V - r^\pi + \lam \H(\pi).
\end{equation}
Here the natural gradient is used for the policy updates in \eqref{eq: obj} with the Fisher
information matrix $F = \E_{s\sim \rho, a\sim\pi} \l[ (\nb_\th \log \pi)(\nb_\th \log \pi)^\top\r]$.
The operator $F^+$ in \eqref{eq: obj} denotes the Moore-Penrose pseudoinverse of $F$ (see
e.g. Appendix C.6 of \cite{cen2020fast} for the calculation of $F^+\nb_{\th_{sa}}E$). The vector
$c = (c_s)_{s\in\S}\in\R^{|\S|}$ in \eqref{eq: obj} depends on state $s$ and is independent of action $a$. When the
policy is represented by the soft-max function, the explicit form of $c$ does not influence the
update of the policy. The vanilla gradient descent algorithm for the minimization problem \eqref{eq:
  objobj} takes the form
\begin{equation}\label{eq: gd_0}
  V^{k+1}_s = V^k_s - \eta_V G_{V_s}(V^k,\pi^k), \quad \pi^{k+1}_{sa} \propto \pi^k_{sa}e^{-\eta_\pi G^{(0)}_{\th_{sa}}(V^k,\pi^k)},
\end{equation}
where $\eta_V$ and $\eta_\pi$ are the learning rates.


Although we show in Section \ref{sec: thm} that the above algorithm converges to a policy that is
close to the optimal policy $\pi^*$, the trajectory towards the minimizer is often not optimal. When
$\ell_s<0$, the path from $\pi$ towards $\pi^*$ may detour if the algorithm is directed according to
$G^{(0)}_{\th_{sa}}$. As shown in Figure \ref{fig: figh0}, the error $\pi - \pi^*$ can increase at
the initial stage of the algorithm. \tcb{Intuitively, since $\nb_\pi E = \nb_\pi \ll \ell \rl_\rho^2$, the gradient $\nb_\pi E$ of the objective function in $\pi$ tries to minimize the residual norm $\ll \ell\rl_\rho$ in the policy space. When $\ell<0$, $V$ underestimates the true value function $V^\pi_\lam$.  Therefore, in order to reduce the residual $\ll \ell \rl_\rho$ in the policy space, $\nb_\pi \ll \ell \rl_\rho^2$ will lead to a direction that reduces $V^\pi_\lam$, which is undesirable.  On the other hand, when $\ell$ is non-negative, $V$ overestimates $V^\pi_\lam$. Hence, $\nb_\pi \ll \ell \rl_\rho^2$ will lead to a direction that increases $V^\pi_\lam$ and, therefore, reduces $\ll \ell \rl_\rho$, which is the desired direction. }

Another
way to understand this aforementioned detour is through the gradient of the objective function. 
Notice that the gradient $\pt_{\pi_{sa}}E$ of the objective
function $E(V,\pi)$ in $\pi$ has the same form as $G^{(0)}_{\th_{sa}}$ in \eqref{eq: obj}. Therefore, 
\begin{equation}\label{eq: gd_1}
\begin{aligned}
    -\pt_{\pi_{sa}} E(V,\pi) =& -\b(\ell\odot\rho)^\top \pt_{\pi_{sa}}\l[(I-\g P^\pi)(V - (I-\g P^\pi)^{-1}(r^\pi-\lam\H(\pi)))\r]   \\
    =& -\b(\ell\odot\rho)^\top \pt_{\pi_{sa}}\l[(I-\g P^\pi)(V - V_\lam^\pi)\r] \\
    =& \b(\ell\odot\rho)^\top(I-\g P^\pi)\pt_{\pi_{sa}}( V_\lam^\pi) + \b\g (\ell\odot\rho)^\top \pt_{\pi_{sa}}(P^\pi)(V - V_\lam^\pi),
\end{aligned}
\end{equation}
where $\ell\odot\rho$ is the entry-wise product of $\ell$ and $\rho$, i.e.,
$(\ell\odot\rho)_s=\rho_s\ell_s$. Note first that the second term of the RHS of \eqref{eq: gd_1}
contains $(V - V_\lam^\pi)$ and $(V - V_\lam^\pi)$ is small because the objective function
\eqref{eq: objobj} pushes $V$ to the true value function $V_\lam^\pi$ for sufficiently large
$\b$. In fact, for any fixed $\pi$, the local fixed point of the $V$ updates satisfies
$G_{V_s}=0$, where $G_{V_s}$ is defined in \eqref{eq: obj}. This implies that
\[
V - V_\lam^\pi = V - (I-\g P^\pi)^{-1}(r^\pi-\lam\H(\pi)) = \frac{1}{\b}(I-\g P^\pi)^{-1}\l[\t{\rho}\odot [(I-\g
P^\pi)^{-\top}\rho]\r]\sim O\l(\frac1\b\r),
\]
where $\t{\rho}_s = 1/\rho_s$. Hence, the second term of the RHS of \eqref{eq: gd_1} is of order $O(1/\b)$. When $\b$ is large, the
gradient $-\pt_{\pi_{sa}} E(V,\pi)$ is dominated by the first term $\b(\ell\odot\rho)^\top(I-\g
P^\pi)\pt_{\pi_{sa}}( V_\lam^\pi)$, which is equivalent to $\beta\ell\rho(1-\g)\pt_\pi( V^\pi_\lam)$
in the one-dimensional case. Note that $\pt_\pi ( V_\lam^\pi)$ is the steepest ascent direction for
maximizing $V^\pi_\lam$. Therefore, when $\ell<0$, the term $\beta\ell\rho(1-\g)\pt_\pi(
V^\pi_\lam)$ is the opposite direction of the steepest ascent, which implies that the gradient
descent algorithm based on $-\pt_{\pi} E$ does not move towards maximizing $V^\pi_\lam$. This
illustrates why the algorithm \eqref{eq: gd_0} can take a detour to the optimal policy $\pi^*$.


\begin{figure}[h!]
  \centering
  \includegraphics[width=1\linewidth]{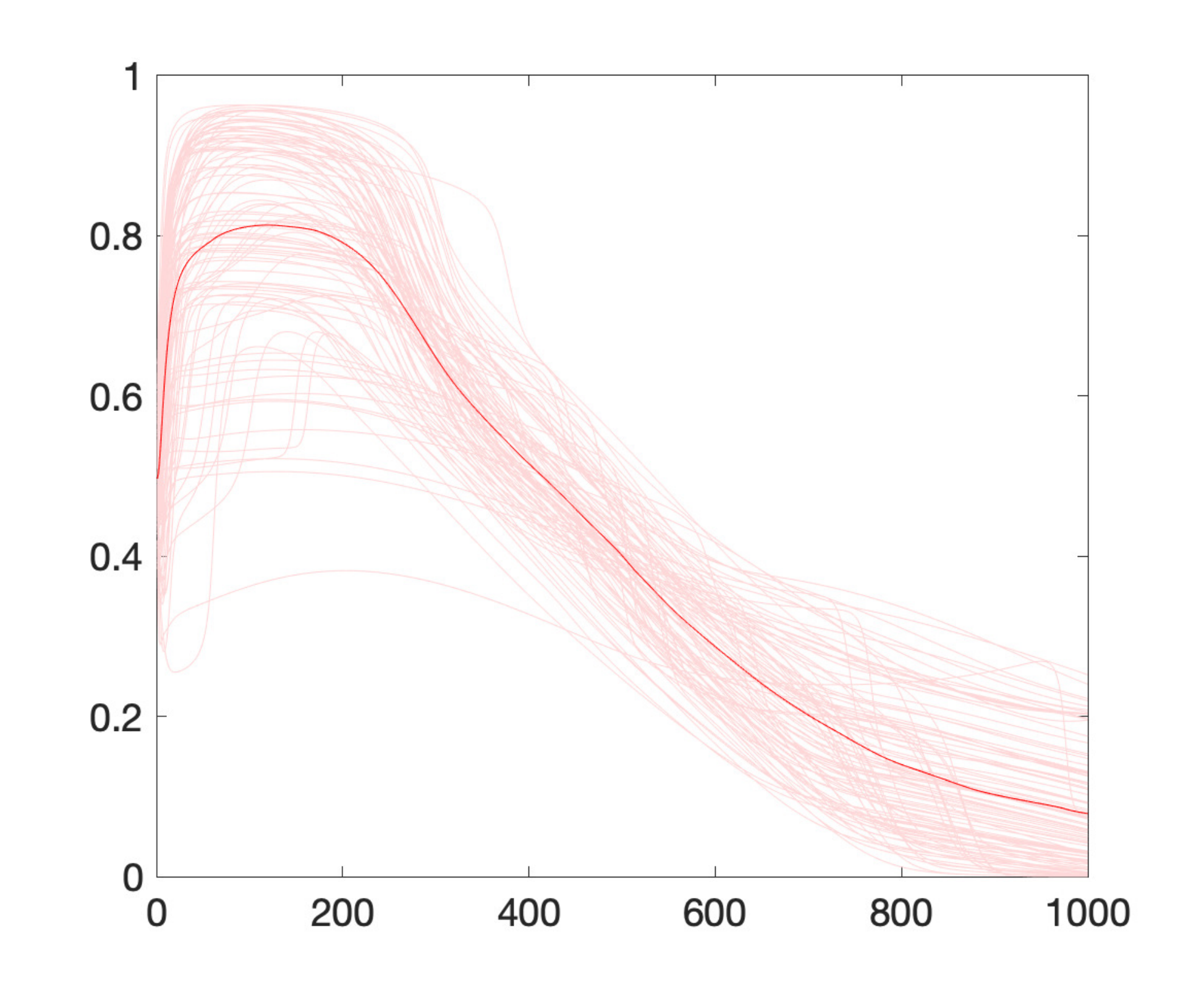}
  \caption{The left plot shows the error $\pi^k - \pi^*$ from the vanilla gradient descent method \eqref{eq: obj}, i.e., Algorithm \ref{algo: V GD}
    with $h^{(0)}(\ell_s) = \ell_s$. We test the algorithm for an MDP with $5$ states and $2$
    actions, and set $\b = 10, \lam = 0$ and the learning rate $\eta_V = \eta_\pi = 1/(4\b)$. The red
    line is the mean over the 100 simulations. \tcb{The middle plot shows the existence of negative residuals. It plots $1$ when there exist negative residuals at step $k$ and $0$ when the residuals are non-negative at all states. The blue line is the mean value over the $100$ simulations. The right plot collects the average lines of the left two plots in one figure. } One can see from the left plot that the error increases at
    the initial stage and then decreases at the latter stage. \tcb{From the middle plot, one can see that there always exists at least one negative residual among the $5$ states at the initial stage and then the residuals become all non-negative at the latter stage. On the right plot, one can see that the error increase is closely related to the existence of negative residuals. When the residuals have fewer or no negative values, the error decreases rapidly as expected. } }
  \label{fig: figh0}
\end{figure}

To address this issue, we propose two methods to improve the efficiency of the algorithm.
\begin{itemize}
\item The {\it clipping} method. The idea is to suppress $\ell_s$ when $\ell_s < 0$, i.e., the
  policy update is based on a clipping modification,
  \[
  G^{(1)}_{\th_{sa}} =  \rho_s\beta \ell_s\mathds{1}_{\ell_s>0} \l[- \g \sum_tP^a_{st}V_t - r_{sa} +\lam\log\pi_{sa}\r] .
  \] 
  This algorithm can be viewed as a gradient descent method for the optimization problem \eqref{eq:
    obj} with a metric $(\mathrm{id},\mathds{1}_{\ell_s>0}\cdot\mathrm{id})$ on $(V_s,\th_{sa})$. 
\item The {\it flipping} method. The idea is to flip the sign of $\ell_s$ when $\ell_s<0$,
  i.e., the policy update is based on 
  \[
  G^{(2)}_{\th_{sa}} = \rho_s\b |\ell_s|\l[- \g \sum_tP^a_{st}V_t - r_{sa} +\lam\log\pi_{sa}\r].
  \] 
\end{itemize}
From the analysis of \eqref{eq: gd_1}, we see that the vanilla gradient descent with
$G^{(0)}_{\th_{sa}}$ in \eqref{eq: obj} would make the policy worse locally. Intuitively, the
clipping method with $G^{(1)}_{\th_{sa}}$ stops updating the policy when $\ell_s<0$, while the
flipping method further improves the policy $\pi$ because $G^{(2)}_{\th_{sa}}$ always has the same
direction as $\nb_\pi V^\pi$.

Three different versions (vanilla, clipping, and flipping) of the variational actor-critic
based on the objective function \eqref{eq: objobj} can be summarized as follows:
\begin{equation}\label{eq: algorithm}
  V^{k+1} = V^k - \eta_VG_V(V^k,\pi^k),\quad \pi^{k+1}_{sa}\propto \pi^k_{sa}\exp(-\eta_\pi G^{(i)}_{\th_{sa}}(V^k,\pi^k) ), \quad i = 0,1,2,
\end{equation}
where $G_V$ and $G^{(i)}_\th$ represent the gradients with respect to $V$ and $\th$,
\begin{equation}\label{eq: updates}
  \begin{aligned}
    G_{V_s}(V,\pi) =& -\rho_s + \b\l(\ell_s\rho_s-\g\sum_t P_{ts}^\pi\ell_t\rho_t\r), \\
    G^{(i)}_{\th_{sa}}(V,\pi) =& \b \rho_s h^{(i)}(\ell_s)\l[-\g \sum_{t} P_{st}^a V_t - r_{sa} +\lam\log\pi_{sa}\r] + c_s, \quad i = 0,1,2,
  \end{aligned}
\end{equation}
and $h^{(i)}$ is defined as
\begin{equation}\label{def of h}
    h^{(0)} = x,\quad h^{(1)}(x) = x\mathds{1}_{x>0}, \quad h^{(2)}(x) = |x|.
\end{equation} 
Here $\ell = (\ell_s)_{s\in \S}$ is the Bellman residual defined in \eqref{def of L}. Under the
model-based setting (i.e., assuming that the transition dynamics is explicitly known), the algorithm
is outlined in Algorithm \ref{algo: V GD}.
\begin{algorithm}
  \caption{Variational actor-critic (model-based version)}
  \label{algo: V GD}
  \begin{algorithmic}[1]
    \REQUIRE $\eta_V, \eta_\pi$: learning rate; $\b$: penalty constant; \\
    \REQUIRE $i$: $i = 0$ (vanilla gradient descent) or $i = 1$ (clipping) or $i=2$ (flipping);
    \STATE Random initialization of $V_0,\pi_0$
    \WHILE{$V,\th$ do not converge}
        \STATE $\ell \gets (I-\g P^\pi)V - r^\pi + \lam \H(\pi)$;
        \STATE $V_s \gets V_s - \eta_V(-\rho_s + \b(\ell_s\rho_s-\g\sum_t P^\pi_{ts} \ell_t\rho_t))$;
        \STATE $\pi_{sa} \gets \pi_{sa}\exp\l[-\eta_\pi \b \rho_s h^{(i)}(\ell_s)(-\g \sum_{t} P_{st}^aV_t -r_{sa} + \lam \log\pi_{sa})\r]$, where $h^{(i)}$ is defined in \eqref{def of h};
        \STATE $\pi_{sa} \gets \frac{1}{\sum_{b}\pi_{sb}}\pi_{sa}$;
    \ENDWHILE
\end{algorithmic}
\end{algorithm}

We would like to point out that the three variants $G^{(i)}_{\th_{sa}}$ for $i = 0,1,2$ coincide
when $\ell_s \geq 0$. Lemma \ref{lemma: around steady} demonstrates that $\ell_s$ is larger than $0$
when $(V,\pi)$ achieves the fixed point of the algorithm. Therefore, the three variants $G^{(i)}_{\th_{sa}}$ with
$i = 0,1,2$ are different only at the initial stage of the optimization process and become the same
at the latter stage with $\ell_s>0$. The vanilla gradient descent might go to a worse policy first
and then go to the direction that maximizes $V^\pi_\lam$; the clipping method might stop updating
the policy until $V$ near its local fixed point with $\ell_s>0$; the flipping method would go all
the way along the direction maximizing $V_\lam^\pi$. Although the three variants converge to the fixed point with different dynamics, they eventually converge to the same fixed point.

\subsection{Model-free setting} \label{sec: stochastic algo}

When the transition dynamics $P^\pi$ is unknown as in the model-free RL, one only has access to one
(or multiple) off-policy trajectory $\{(s_t,a_t,r_t)\}_{t=1}^T$ generated by a behavior policy
$\pi_b(s,a)$. Algorithm \ref{algo: V GD} can in principle be generalized to the model-free setting
if one updates $(V,\pi)$ based on an unbiased estimate of the gradient \eqref{eq: updates} (see
Appendix \ref{appendix: SGD_V} for the stochastic algorithm in $V$-formulation). However, reweighting is 
necessary in order to correct the difference between the behavior policy $\pi_b$ and the target policy
$\pi$ when approximating the term $P^\pi V$. The reweighting method, although unbiased, would cause
instability in the process of SGD (\cite{an2020resampling, schlegel2019importance}).


It is instead preferred to use the $Q$-formulation as there is no need to correct the behavior
policy. The stochastic algorithm in the $Q$-formulation is based on the following objective
function:
\begin{equation}\label{eq: q}
\underset{(s,a)\sim\rho}{\E}\l[ -Q(s,a) + \frac{\b}2\l(Q(s,a) - \underset{s'\sim
    P^a(\cdot|s,a)}{\E}\l[\g\sum_a ( Q(s',a)- \lam \log\pi(s',a))\pi(s',a) |s,a\r] - r(s,a)\r)^2\r],
\end{equation}
where $\rho>0$ is the positive stationary distribution from behavior policy $\pi_b$. By using the
$Q$-formulation, one can directly use the trajectory $\{(s_t,a_t,r_t)\}_{t=1}^T$ without reweighting.

When $Q(s,a,\o)$ is parametrized by $\o$ and $\pi(s,a,\th)$ is parametrized by $\th$, the updates of
$(Q, \pi)$ are according to the following unbiased estimates of the gradients:
\begin{equation}\label{eq: gd_2}
    \begin{aligned}
    &(G_Q)_t = -\nb_\o Q^k(s_t,a_t) + \b L_t\l(\nb_\o Q^k(s_t,a_t) - \g\sum_a \nb_\o Q^k(s'_{t+1},a)\pi^k(s'_{t+1},a)\r),\\
    &(G^{(i)}_\pi)_t = \b \h{h}^{(i)}(L_t)\l(-\g\sum_a(Q^k(s'_{t+1},a) - \lam\log\pi^k(s'_{t+1},a)-\lam)\nb_\th\pi^k(s'_{t+1},a) \r),
\end{aligned}
\end{equation}
where $Q^k(s,a) = Q(s,a,\o_k)$, $\pi^k(s,a) = \pi(s,a,\th_k)$ and $L_t$ is the unbiased Bellman
residual,
\[
L_t = Q^k(s_t,a_t) - r_t - \g\sum_a \l(
Q^k(s_{t+1},a)-\lam\log\pi^k(s_{t+1},a)\r)\pi^k(s_{t+1},a).
\]
Here the next state $s'_{t+1}$ in $G_Q$ and $G_\pi$ needs to be uncorrelated with the next state
$s_{t+1}$ in the trajectory. Since it is usually unrealistic to generate another independent sample at
state $s_t$ with action $a_t$, the BFF algorithm is proposed in \cite{zhu2020borrowing} to generate
an approximate $s'_{t+1}$
\[
s'_{t+1} = s_t + (s_{t+2} - s_{t+1}).
\]
It is shown in \cite{zhu2020borrowing} that when the underlying dynamics changes smoothly with respect to the
actions and states, the BFF approximation is close to the independent sample in expectation.  Furthermore,
$\h{h}^{(i)}(x)$ is defined as follows,
\begin{equation}\label{def of hp}
  \h{h}^{(1)}(L_t) = L_t\mathds{1}_{\h{\ell}_{s_t}>0}, \quad \h{h}^{(2)}(L_t) =\l\{ \begin{aligned}
    &L_t, \quad \h{\ell}_{s_t}>0\\
    &-L_t, \quad \h{\ell}_{s_t}<0\\
  \end{aligned}\r., \quad \text{where }\h{\ell}_s = \frac{1}{|\{s_t = s\}|}\sum_{s_t = s}L_t.
\end{equation}
Note that one cannot directly apply the
clipping or flipping function $h^{(i)}$ defined in \eqref{def of h} on the stochastic Bellman
residual $L_t$ because $\E[h^{(i)}(L_t)|s_t =s] \neq h^{(i)}( \ell_s )$. Instead, $h^{(i)}( \ell_s )$ is estimated in two steps: first, one approximates the
Bellman residual $\ell_s$ with $\h\ell_s$, and then $L_t$ is suppressed or flipped according to the
value of $\h\ell_s$. The stochastic algorithm for the $Q$-formulation is summarized in Algorithm
\ref{algo: Q}.

\begin{remark}\label{rmk: ALGAE}
A similar objective function has been used in \cite{nachum2019algaedice}. 
\tcb{Note that if one multiplies a negative constant to equation (14) of \cite{nachum2019algaedice}, then the maximum operators become minimum operators. Extend the operator $\mathcal{B}_\pi \nu(s,a) = r(s,a) + \g\mathbb{E}_{s'\sim P^a(\cdot |s,a)}[\sum_a \nu(s',a) \pi(s',a) |s,a]$ and view $\nu(s,a)$ as $Q(s,a)$, one finds that \eqref{eq: q} is equivalent to equation (14) in \cite{nachum2019algaedice} up to a constant by setting $f_*(x) = x^2/2$. In other words, our formulation \eqref{eq: q} is equivalent to the main formulation (8) in [16] when $f = x^2/2$ and $\alpha < 0$. }
The paper \cite{nachum2019algaedice} also pointed out
that the off-policy trajectory can be directly used for the policy gradient. Although
\cite{nachum2019algaedice} uses a similar trick as the clipping method for numerical experiments, it
is however only mentioned in the Appendix.  There are other two differences between the current
paper and \cite{nachum2019algaedice}. First, we propose another more efficient algorithm, flipping,
to accelerate the convergence rate. One can see the comparison of the two methods in Section
\ref{sec: numerics}. Second, we use $\h{h}^{(1)}(L_t)$ defined in \eqref{def of hp}, while
\cite{nachum2019algaedice} directly applied $h^{(1)}(x)$ defined in \eqref{def of h} to $L_t$. We
note that $\h{h}^{(1)}(L_t)$ is a better estimates to $h^{(1)}(\ell_t)$ than $h^{(1)}(L_t)$ as
explained after \eqref{def of hp}.
\end{remark}

\begin{algorithm}
  \caption{Variational actor-critic (model-free version)}
  \label{algo: Q}
  \begin{algorithmic}[1]
    \REQUIRE $\eta_V, \eta_\pi$: learning rates; $\b$: prefactor; $M$: batch size; \\  $Q(s,a,\o),\pi(s,a,\th)$: parametrized approximation of $Q(s,a),\pi(s,a)$; \\
    $\{s_t,a_t,r_t\}_{t=0}^T$: trajectory generated from off-policy $\pi_b$;
    \STATE Random initialization of $\th_0,\o_0$, $k=0$
    \WHILE{$\o,\th$ do not converge}
        \STATE $j\gets 1$, $k\gets k+1$
        \FOR{$t=(k-1)M+1,\cdots,kM$}
        \STATE $s_j = s_t$
        \STATE $L_j = Q(s_t,a_t,\o) - r_t - \g (V(s_{t+1}) - \lam\H(s_{t+1}) )$
        \STATE $s'_{t+1} \gets s_t + (s_{t+2} - s_{t+1})$
        \STATE $G_Q^j = -\nb_\o Q(s_t,a_t,\o) + \b L_j(\nb_\o Q(s_t,a_t,\o) - \g\sum_a \nb_\o Q(s'_{t+1},a_t,\o)\pi(s'_{t+1},a,\th))$
        \STATE $G_\pi^j = \b \l(-\g\sum_a(Q(s'_{t+1},a_t,\o) - \lam\log\pi(s'_{t+1},a,\th)-\lam)\nb_\th\pi(s'_{t+1},a,\th) \r)$
        \STATE $j \gets j+1$
        \ENDFOR
        \STATE $G_Q \gets \frac1M\sum_{j=1}^MG_Q^j $; $\o \gets \o - \eta_Q G_Q$
        \STATE $\h\ell_s \gets \sum_{s_j = s} L_j$
        \STATE $G^{(i)}_\pi\gets \frac1M\sum_{j=1}^M\h{h}^{(i)}(L_j)G_\pi^j $; $\th \gets \th - \eta_\pi G^{(i)}_\pi$, where $\h{h}^{(i)}$ is defined in \eqref{def of hp}
        \STATE $V(s) \gets \sum_a Q(s,a,\o)\pi(s,a,\th)$; $\H(s) \gets \sum_a \pi(s,a,\th)\log\pi(s,a,\th)$
    \ENDWHILE
\end{algorithmic}
\end{algorithm}

Specifically, if $\pi$ is the soft-max function of $\th$, then the $\pi$ updates based on
$(G^{(i)}_\pi)_t$ in \eqref{eq: gd_2} can be simplified to,
\[
\begin{aligned}
  &(f^i_{sa})_t = \g\pi^k(s,a)\b \h{h}^{(i)}(L_t) \l[V^k(s) - Q^k(s,a) + \lam\log\pi^k(s,a) - \lam
    \H(\pi^k_s)\r]\mathds{1}_{s = s_{t+1}'},\\
  &\pi^{k+1}(s,a) \propto \pi^k(s,a)\exp\l(-\frac{\eta_\pi}{M} \sum_{t = (k-1)M+1}^{kM}
  (f^i_{sa})_t\r).
\end{aligned}
\]

\section{Fixed Point Estimates}\label{sec: thm}


We define $(\Vinf, \piinf)$ as the {\em fixed point} of Algorithm \ref{algo: V GD} if
\begin{equation}\label{eq: def of fixedpt}
  \Vinf = \Vinf - \eta_V G_V(\Vinf, \piinf), \quad \piinf_{sa} \propto \piinf_{sa}\exp\l(-\eta_\pi
  G^{(i)}_{\th_{sa}}(\Vinf, \piinf)\r),
\end{equation}
where $G_V$ and $G^{(i)}_\th$ are defined in \eqref{eq: updates}. Specifically, for the
non-regularized MDP, i.e., $\lam = 0$, the fixed point $\piinf$ is exactly the optimal policy
$\pi^*$ when $\b$ is sufficiently large; for the regularized MDP, i.e., $\lam > 0$, the fixed point
$\piinf$ is close to $\pi^*$ for large $\b$ and small $\lam$. 

We analyze the non-regularized MDP and regularized MDP in Section \ref{sec: non_reg} and Section
\ref{sec: reg}, respectively.  For $\lam = 0$, we prove in Lemma \ref{lemma: gthneq0} that when $\b$ is sufficiently large and $V$
achieves its fixed point $\Vinf$, the gradient of the policy $G^{(i)}_\th(\Vinf, \pi)$ cannot be equal to
$0$ for any $ \pi$. This implies that the fixed point $\piinf$ of the
policy updates is on the boundary of the probability simplex, i.e., $\piinf$ is a deterministic
policy. Since all deterministic policies form a discrete set and the optimal policy $\pi^*$ for a
non-regularized MDP is also a deterministic policy, there exists $\b_0>0$ such that for all
$\b>\b_0$, the fixed point $\piinf$ of the algorithm is the optimal policy $\pi^*$. On the other
hand, for $\lam>0$, the fixed point $\piinf$ is a stochastic policy. Therefore, one can only prove
that for $\b > \b_0$ and $0<\lam<\lam_0$, the fixed point is close to the non-regularized optimal
policy $\pi^*$.

Before analyzing the fixed point of the algorithm, we state some basic properties of the matrix $(I
- \g P^\pi)$ in Proposition \ref{prop: transmatrix}. In Lemma \ref{lemma: around steady}, we prove
that the Bellman residual $\ell(V,\pi)$ defined in \eqref{def of L} is always positive at the fixed
point $(\Vinf, \piinf)$. This implies that $G^{(i)}_{\th_{sa}}$ takes the same form at the fixed
point $(\Vinf,\piinf)$. Hereafter, we shall omit the index $i$ of $G^{(i)}_{\th_{sa}}$ for
notational simplicity.

\begin{proposition}\label{prop: transmatrix}
For any transition matrix $P$ and positive vector $c$, the following inequalities hold,
\begin{equation}\label{eq: ineq1}
    \mathbf{0}< (I-\g P)^{-1}c < \frac{\max_i
  c_i}{1 - \g} \mathbf{1}, \quad\mathbf{0}< (I-\g P)^{-\top}c < \frac{\sum c_s}{(1-\g)}\mathbf{1}.
\end{equation} 
For any constant $c$, 
\begin{equation}\label{eq: ineq2}
    \text{if }(I-\g P )x \leq c\mathbf{1}, \quad \text{ then  }x\leq \frac{c}{1-\g}\mathbf{1}.
\end{equation}

\end{proposition}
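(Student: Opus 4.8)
The engine behind all three statements is the Neumann series. Since $P$ is row-stochastic, $\g P$ has spectral radius $\g<1$, so $I-\g P$ is invertible and
\[
(I-\g P)^{-1}=\sum_{k=0}^{\infty}\g^k P^k .
\]
Before touching the individual inequalities, I would record two structural facts. Each power $P^k$ is again row-stochastic, hence entrywise nonnegative with $P^k\mathbf{1}=\mathbf{1}$; summing the series then shows that $(I-\g P)^{-1}$ is entrywise nonnegative and that $(I-\g P)^{-1}\mathbf{1}=\frac{1}{1-\g}\mathbf{1}$. Separately, every entry of a stochastic matrix lies in $[0,1]$, a bound I will need for the transpose estimate. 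These two observations reduce everything to manipulating a convergent geometric series of nonnegative terms.

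For the first inequality in \eqref{eq: ineq1} I would split $(I-\g P)^{-1}c=c+\sum_{k\geq1}\g^k P^k c$. The lower bound is immediate: $c>\mathbf{0}$ and every remaining term is nonnegative, so the sum is strictly positive. For the upper bound I use $P^k c\leq(\max_i c_i)P^k\mathbf{1}=(\max_i c_i)\mathbf{1}$, and summing the geometric series produces the factor $\frac{1}{1-\g}$. The transpose inequality is handled the same way after writing $(I-\g P)^{-\top}=\sum_{k}\g^k (P^k)^{\top}$: the lower bound again comes from the $k=0$ term, while for the upper bound I estimate the $s$-th entry of $(P^k)^{\top}c$ by $\sum_t [P^k]_{ts}c_t\leq\sum_t c_t$, using $[P^k]_{ts}\leq1$ and $c_t>0$; summing over $k$ gives the stated $\frac{\sum_s c_s}{1-\g}$ bound. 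The appearance of $\sum_s c_s$ rather than $\max_i c_i$ is exactly the price of losing row-stochasticity under transposition. The implication \eqref{eq: ineq2} is cleanest of all: starting from $(I-\g P)x\leq c\mathbf{1}$, I apply the entrywise nonnegative operator $(I-\g P)^{-1}$ to both sides, which preserves the componentwise inequality, and then $(I-\g P)^{-1}\mathbf{1}=\frac{1}{1-\g}\mathbf{1}$ turns the right-hand side into $\frac{c}{1-\g}\mathbf{1}$, yielding $x\leq\frac{c}{1-\g}\mathbf{1}$ (the sign of the scalar $c$ is irrelevant).

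I do not anticipate a genuine obstacle; the only two points demanding care are (i) justifying the entrywise nonnegativity of $(I-\g P)^{-1}$ before invoking it to preserve inequality directions, which the Neumann series supplies, and (ii) the strict-versus-weak distinction in the upper bounds of \eqref{eq: ineq1}. The lower bounds are genuinely strict because $c>\mathbf{0}$ feeds the $k=0$ term, but the upper bounds are really $\leq$: a constant vector $c$ makes $P^k c\equiv(\max_i c_i)\mathbf{1}$ and hence gives equality at every entry, while more generally an absorbing state $s$ with $c_s=\max_i c_i$ forces equality in the $s$-th component. I would therefore either read $<$ as $\leq$ (which is all the later fixed-point analysis actually uses) or append a mild non-degeneracy hypothesis under which the top bound is strict.
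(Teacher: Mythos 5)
Your proof is correct, but it takes a genuinely different route from the paper's. The paper argues componentwise in the style of a discrete maximum principle: for the lower bounds it assumes the minimal component $x_s=\min_i x_i$ is $\leq 0$ and derives $c_s\leq 0$, a contradiction (for the transpose case it sums $c_s$ over all indices with $x_s\leq 0$ to reach the same contradiction), and for the upper bounds it evaluates the defining equation at the maximal component; \eqref{eq: ineq2} is handled by the same extremal-component estimate. Your Neumann-series argument instead establishes once and for all that $(I-\g P)^{-1}=\sum_{k\geq 0}\g^k P^k$ is entrywise nonnegative with $(I-\g P)^{-1}\mathbf{1}=\frac{1}{1-\g}\mathbf{1}$, which unifies all three claims and in particular reduces \eqref{eq: ineq2} to a one-line application of a positivity-preserving operator, cleaner than the paper's componentwise estimate; the paper's approach, in exchange, is more elementary in that it never invokes convergence of a matrix series. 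Two smaller points of comparison. First, for the transpose upper bound the paper's identity $(1-\g)\sum_s x_s=\mathbf{1}^\top(I-\g P)^\top x=\sum_s c_s$, combined with $x_s>0$, yields the strict inequality $x_s<\frac{\sum_s c_s}{1-\g}$ directly (when $|\S|\geq 2$), whereas your termwise bound $[(P^k)^\top c]_s\leq \sum_t c_t$ gives only $\leq$; you can recover strictness by noting that the $k=0$ term $c_s$ is strictly below $\sum_t c_t$. Second, your caveat about the first upper bound is well taken: the paper's own proof likewise only derives $x_{s'}\leq \frac{\max_i c_i}{1-\g}$, and your constant-vector example shows strictness genuinely fails there, so the strict $<$ in that part of \eqref{eq: ineq1} should indeed be read as $\leq$, which is all the later fixed-point analysis actually uses.
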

See Appendix \ref{appendix: proof of prop} for the proof. 

\begin{lemma}\label{lemma: around steady}
The fixed point $(\Vinf, \piinf)$ of Algorithm \ref{algo: V GD} satisfies $\ell(\Vinf, \piinf) > 0$
with $\ell(V,\pi)$ defined in \eqref{def of L}.
\end{lemma}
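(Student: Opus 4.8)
The plan is to derive the positivity of $\ell$ entirely from the fixed-point condition on $V$, since the sign of the Bellman residual at the fixed point is governed by the $V$-update alone and does not require the policy equation. First I would use the first relation in \eqref{eq: def of fixedpt}, which forces $G_V(\Vinf,\piinf)=\mathbf 0$. Writing out $G_{V_s}$ from \eqref{eq: updates} componentwise, the equation $G_{V_s}=0$ reads
\[
\rho_s = \b\l(\ell_s\rho_s - \g\sum_t P^{\piinf}_{ts}\,\ell_t\rho_t\r), \qquad s\in\S,
\]
where here $\ell=\ell(\Vinf,\piinf)$.

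Next I would recast this system in matrix form. Writing $\ell\odot\rho$ for the entrywise product and noting that $\sum_t P^{\piinf}_{ts}(\ell\odot\rho)_t = \l((P^{\piinf})^\top(\ell\odot\rho)\r)_s$, the collection of equations becomes $\rho = \b\,(I-\g P^{\piinf})^\top(\ell\odot\rho)$. Because $\g\in(0,1)$ and $P^{\piinf}$ is a transition matrix, $(I-\g P^{\piinf})$ is invertible (this is implicit in Proposition \ref{prop: transmatrix}), so I may solve
\[
\ell\odot\rho = \frac1\b\,(I-\g P^{\piinf})^{-\top}\rho.
\]

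The decisive step is then to invoke the second inequality in \eqref{eq: ineq1} of Proposition \ref{prop: transmatrix} with the transition matrix $P=P^{\piinf}$ and the positive vector $c=\rho$: it guarantees $(I-\g P^{\piinf})^{-\top}\rho > \mathbf 0$. Hence $\ell\odot\rho>\mathbf 0$, and since $\rho_s>0$ for every $s$ and $\b>0$, dividing through by $\rho_s$ yields $\ell_s(\Vinf,\piinf)>0$ for all $s$, i.e. $\ell(\Vinf,\piinf)>\mathbf 0$.

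I do not expect a serious obstacle here; the only point requiring care is the bookkeeping of transposes when passing from the index expression $\sum_t P^{\piinf}_{ts}\ell_t\rho_t$ to $(P^{\piinf})^\top(\ell\odot\rho)$, and matching the resulting $(I-\g P^{\piinf})^{-\top}$ to the \emph{transposed} inequality in \eqref{eq: ineq1} rather than the untransposed one. It is worth emphasizing that the argument never uses the policy fixed-point equation in \eqref{eq: def of fixedpt}, which is consistent with the fact that positivity of the residual is really a statement about $\Vinf$ relative to the value function of the fixed policy $\piinf$.
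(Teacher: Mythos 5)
Your proof is correct and follows essentially the same route as the paper: both set $G_V(\Vinf,\piinf)=\mathbf{0}$, solve the resulting linear system to get $\ell\odot\rho = \frac1\b (I-\g P^{\piinf})^{-\top}\rho$, and conclude positivity from the transposed inequality in \eqref{eq: ineq1} of Proposition \ref{prop: transmatrix}. Your closing observation that the policy fixed-point equation is never needed is also consistent with the paper's argument.
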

\begin{proof}
  The $V$ update achieves its fixed point when $G_V(\Vinf,\piinf) = 0$, which gives,
  \[
  -\rho + \b(I-\g P^{\piinf})^\top\l[\l((I-\g P^{\piinf}) \Vinf - r^\pi + \lam \H(\piinf)\r)\odot\rho \r] = \mathbf{0}.
  \]
  This leads to,
  \[
  \ell(\Vinf,\piinf) = \l[(I-\g P^{\piinf}) \Vinf - r^\pi + \lam \H(\piinf) \r] = \frac1\b\t{\rho}\odot\l[(I-\g P^{\piinf})^{-\top}\rho\r],
  \]
where $\t{\rho}_s = 1/\rho_s>0$.   By \eqref{eq: ineq1} of  Proposition \ref{prop: transmatrix}, all elements of $\ell$ are positive, which completes the proof. 
\end{proof}
Since the three variants $G^{(i)}_{\th_{sa}}$ with $i = 0,1,2$ defined in \eqref{eq: updates} are
the same when $\ell_s>0$ and $\ell_s$ is positive at the fixed point $(\Vinf, \piinf)$, they share the same fixed
point.

\subsection{Fixed point for the non-regularized MDP} \label{sec: non_reg}
Recall that the non-regularized MDP refers to the case where $\lam =0$. Below we prove that there
exists a threshold $\b_0$, such that for all $\b>\b_0$, the fixed point $\piinf$ of the policy
updates is the optimal policy $\pi^* =\argmax_\pi V^\pi = \argmax_\pi (I-\g P^\pi)^{-1} r$. For simplicity, we assume that the distribution in \eqref{eq: objobj} is the uniform distribution, i.e., $\rho_s = 1/|\S|$ in this section. The results can be extended to general distribution $\rho$ (see Remark \ref{rmk: general rho} for details). \tcb{Besides, we always assume that the action gap is strictly positive, i.e., let $a^*_s = \max_a (r_{sa} + \g \sum_tP^a_{st}V^*_t)$, then 
\[
\max_{a\neq a^*_s} (r_{sa} + \g \sum_tP^a_{st}V^*_t) < r_{sa^*_s} + \g \sum_tP^{a^*_s}_{st}V^*_t, \quad \forall s\in\S.
\]}

The fixed point $(\Vinf, \piinf)$ of the algorithm is stated in Lemma \ref{lemma:
  pitoboundary}. 
  Note that  $(\Vinf, \piinf)$ satisfies similar coupled equations as the optimal solution $(V^*,\pi^*)$ in Lemma \ref{lemma:
  optimal policy}. The only difference is that
$(\Vinf, \piinf)$ satisfies $G_V(\Vinf, \piinf) = \mathbf{0}$ while $(V^*,\pi^*)$ satisfies the Bellman equation $V^* =
(I-\g P^{\pi^*})^{-1}r$.  Note that $G_V = \mathbf{0}$ can be written as
\[
V(\pi) = (I-\g P^\pi)^{-1}r + \frac1\b (I-\g P^\pi)^{-1}(I - \g P^\pi)^{-\top}\mathbf{1}.
\]
When $\b$ is sufficiently large, $\Vinf(\pi)$ approaches the true value function $V^\pi = (I-\g
P^\pi)^{-1}r$. On the other hand, we prove in Lemma \ref{lemma: as_asstar} that there exists a
threshold $\alpha$, such that $|V^*_s - \Vinf_s | \leq \alpha$ for $\forall s\in\S$, then $\piinf =
\pi^*$. Combining the above lemmas, one concludes in Theorem \ref{thm: non-reg} that
the fixed point $\piinf$ is the optimal policy $\pi^*$ as long as $\b >
\frac{|\S|}{(1-\g)^2\alpha}$, where $\alpha$ is a constant related to the optimal solution $(V^*,\pi^*)$. Note that the
lower bound for the prefactor $\b$ is not sharp, and we shall see in Section \ref{sec: numerics} that the
algorithm converges numerically to the optimal policy $\pi^*$ with much smaller $\b$.

\begin{lemma}\label{lemma: pitoboundary}
The fixed point $(\Vinf, \piinf)$ of Algorithm \ref{algo: V GD} satisfies the following coupled equations,
\begin{equation}\label{eq: pitobdy}
    \l\{
\begin{aligned}
    &G_V(\Vinf,\piinf) = 0,\\
    &\piinf_{sa} = \l\{\begin{aligned} &1,\quad a= a_s ; \\ &0,\quad a\neq a_s,\end{aligned}\r.\quad
\text{where} \ a_s = \argmax_a \l(\g\sum_t P^{a}_{st} \Vinf_t + r_{sa}\r).
\end{aligned}
\r.
\end{equation}
\end{lemma}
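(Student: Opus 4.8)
I would obtain the two equations in \eqref{eq: pitobdy} from the two components of the fixed-point condition \eqref{eq: def of fixedpt}. The first is immediate: the $V$-component reads $\Vinf = \Vinf - \eta_V G_V(\Vinf,\piinf)$, which forces $G_V(\Vinf,\piinf)=0$ — precisely the stationarity relation already exploited in the proof of Lemma \ref{lemma: around steady}, so no new work is needed for the first line.

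For the second line I would first simplify the policy gradient at the fixed point. Lemma \ref{lemma: around steady} gives $\ell(\Vinf,\piinf)>0$, so $h^{(i)}(\ell_s)=\ell_s$ for every variant $i=0,1,2$, and with $\lam=0$ the gradient in \eqref{eq: updates} becomes $G_{\th_{sa}}(\Vinf,\piinf)=c_s-\b\rho_s\ell_s\,q_{sa}$, where I abbreviate $q_{sa}:=\g\sum_t P^a_{st}\Vinf_t+r_{sa}$, the scalar $\b\rho_s\ell_s$ is strictly positive, and $c_s$ does not depend on $a$. Reading the policy part of \eqref{eq: def of fixedpt} coordinate-wise gives $\piinf_{sa}\bigl(1-\exp(-\eta_\pi G_{\th_{sa}})/Z_s\bigr)=0$ with $Z_s=\sum_b\piinf_{sb}\exp(-\eta_\pi G_{\th_{sb}})$, so for each $s$ and $a$ either $\piinf_{sa}=0$ or $G_{\th_{sa}}$ equals the common constant $-\eta_\pi^{-1}\log Z_s$. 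Since $q_{sa}\mapsto G_{\th_{sa}}$ is strictly decreasing, this says the support of $\piinf_s$ consists only of actions that share a single value of $q_{sa}$, and that the maximizer $a_s=\argmax_a q_{sa}$ is exactly the action with the strictly smallest gradient and hence the largest update multiplier $\exp(-\eta_\pi G_{\th_{sa}})$.

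Reducing this support to the single action $a_s$ is the step I expect to be the main obstacle, because a soft-max policy attains a simplex vertex only in the limit $\th\to\infty$; thus ``fixed point'' must be read as the attracting limit of the iteration rather than a finite-$\th$ zero of the map, and the argument is necessarily one about stability rather than pure algebra. I would proceed in two stages. First, I would exclude an interior (fully supported) fixed point, which would require $G_{\th_{sa}}$ to be constant in $a$, i.e. all $q_{sa}$ equal; this is the content of the companion Lemma \ref{lemma: gthneq0}, stating $G_\th(\Vinf,\pi)\neq0$ for all $\pi$, so that no interior stationary policy exists and the iterate is driven to the boundary. Second, among the boundary configurations I would show that only the vertex at $a_s$ is attracting: at a deterministic policy concentrated at some $a_0\neq a_s$, or at a support containing more than one action, an arbitrarily small perturbation assigning positive probability to $a_s$ grows under the map, since $a_s$ carries the strictly largest multiplier; hence every such configuration is unstable. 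Ties in $q_{sa}$ are non-generic and are ruled out by the action-gap hypothesis, so the maximizer $a_s$ is unique and $\piinf_{sa_s}=1$ is the only attracting possibility, which is the second line of \eqref{eq: pitobdy}.

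Finally I would verify self-consistency: once $\piinf$ is deterministic at $a_s$ the value $\Vinf$ is pinned by $G_V(\Vinf,\piinf)=0$, and $a_s$ in \eqref{eq: pitobdy} is defined through this same $\Vinf$, so both relations hold simultaneously. The delicate part of the whole argument is turning the heuristic ``the $q$-maximizing action grows'' into a rigorous statement about the attracting fixed point; the algebraic first equation and the simplification of $G_{\th_{sa}}$ are routine by comparison.
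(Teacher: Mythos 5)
Your reading of the statement is the right one, and your opening algebra is sound: the $V$-equation is immediate, the simplification $G_{\th_{sa}}=c_s-\b\rho_s\ell_s q_{sa}$ (using Lemma \ref{lemma: around steady}) is exactly what the paper uses, and your observation that the support of $\piinf_s$ must share a common value of $q_{sa}$ is a correct reading of \eqref{eq: def of fixedpt}. But your route then diverges from the paper's, and the divergence is not free. The paper does not classify algebraic fixed points and argue by stability. It freezes $V\equiv\Vinf$ and computes the $\pi$-iteration in closed form: since $\ell(\Vinf,\piinf)>0$, the update (up to the positive $s$-dependent factor $\b\rho_s\ell_s$ and the gauge term $c_s$, which drop out under normalization) is $(\pi_{k+1})_{sa}\propto(\pi_k)_{sa}\exp\l(\eta_\pi\l(q_{sa}-q_{sa_s}\r)\r)$, so after $k$ steps $(\pi_k)_{sa}\propto(\pi_0)_{sa}\exp(kf_{sa})$ with $f_{sa}<0$ for $a\neq a_s$. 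Every non-maximizing action's mass decays geometrically, and the limit is the vertex at $a_s$. This is constructive, global in the initialization (any full-support start), valid for every $\b>0$, and needs neither Lemma \ref{lemma: gthneq0} nor any stability analysis.

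Your plan has two concrete costs by comparison. First, your stage 1 invokes Lemma \ref{lemma: gthneq0}, which carries hypotheses the present lemma does not require: $|\A|\geq 3$, the null-space condition on $P^a-P^{a'}$, a non-constant reward, and $\b$ sufficiently large; the paper's argument needs none of these. Second, your stage 2 --- instability of every configuration other than the $a_s$-vertex --- is precisely the step you defer, and note that even if executed it does not by itself yield the lemma under your own (correct) dynamic reading: instability of the other candidates plus local attractivity of the $a_s$-vertex still leaves open whether the iteration from the algorithm's interior initialization converges at all. The natural way to close both gaps is the multiplicative-ratio computation above, i.e. showing $\pi_{sa}/\pi_{sa_s}$ contracts by a fixed factor at every step; once you write that down, the classification scaffolding becomes unnecessary. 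There is also a coupling subtlety you should make explicit: perturbing $\pi$ moves the solution $V$ of $G_V(V,\pi)=0$, hence $\ell$ and $q_{sa}$, so your perturbation argument needs a continuity clause. (One caveat you share with the paper: both arguments implicitly assume the maximizer $a_s$ of $q_{sa}$ with respect to $\Vinf$ is unique; the stated action-gap hypothesis concerns $V^*$, not $\Vinf$, so it does not literally rule out ties here.)
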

\begin{proof}
Since $V$ is updated as follows
\[
V_{k+1} = V_k - \eta_VG_V(V_k,\pi_k),
\] 
the only fixed point for the above update satisfies $G_V(\Vinf,\piinf) = 0$.  Hence, it is
equivalent to prove that if $V_k \equiv \Vinf$ in the $\pi$ updates,
\[
(\pi_{k+1})_{sa} \propto (\pi_k)_{sa} \exp\l(-\eta_\pi G_{\th_{sa}}(V_k,\pi_k)\r),
\]
then $\lim_{k\to\infty}\pi_k = \piinf$ with $\piinf$ stated in the lemma.

We prove in Lemma \ref{lemma: around steady} that the Bellman residual $\ell_s>0$ always holds
at the fixed point $(\Vinf, \piinf)$, so $\pi_k$ is updated as follows around the fixed point,
\begin{equation*}
  \begin{aligned}
    &(\pi_{k+1})_{sa} \propto (\pi_k)_{sa} \exp\l(\eta_\pi\l(\g \sum_t P^a_{st}(V_k)_t +r_{sa}\r) \r).
  \end{aligned}
\end{equation*}
Plugging $V_k \equiv \Vinf$ into the above equation gives
\[
(\pi_{k+1})_{sa} \propto (\pi_k)_{sa} \exp\l[\eta_\pi \l( \g\sum_t P^a_{st}\Vinf_t + r_{sa} - \l(\g\sum_t P^{a_s}_{st} \Vinf_t + r_{s \, a_s}\r)\r)\r],
\]
where $a_s$ is defined in \eqref{eq: pitobdy}.
Then one has
\[
\l\{
\begin{aligned}
    &(\pi_{k+1})_{sa} \propto (\pi_k)_{sa}, \quad\text{for }a=a_s;\\
    &(\pi_{k+1})_{sa} \propto (\pi_k)_{sa}\exp(f_{sa}(\Vinf)),\quad \text{for }a\neq a_s,
\end{aligned}
\r.
\]
where $f_{sa}(\Vinf) < 0$. Hence, the $\pi$ updates can be equivalently written as
\[
\l\{
\begin{aligned}
    &(\pi_{k})_{sa} \propto (\pi_0)_{sa}, \quad\text{for }a=a_s;\\
    &(\pi_k)_{sa} \propto (\pi_0)_{sa}\exp(k f_{sa}(\Vinf)),\quad \text{for }a\neq a_s.
\end{aligned}
\r.
\]
Notice that as $f_{sa}(\Vinf) < 0$,  $\lim_{k\to\infty}(\pi_0)_{sa}\exp(k f_{sa}(\Vinf)) = 0$. Therefore, 
\[
\lim_{k\to\infty}(\pi_{k})_{sa} = \l\{
\begin{aligned}
   &1, \quad a =a_s;\\
   &0, \quad a\neq a_s,
\end{aligned}
   \r. 
\]
which completes the proof. 
\end{proof}

\begin{lemma}\label{lemma: optimal policy}
  The maximum and maximizer $(V^*,\pi^*)$ of the optimization problem \eqref{eq:pg}  satisfy the following coupled equations:
\begin{equation}\label{eq: optimal policy}
      \l\{
\begin{aligned}
    &V^* = r^{\pi^*} + \g P^{\pi^*}V^*,\\
    &\pi^*_{sa} = \l\{\begin{aligned} &1,\quad a= a^*_s ; \\ &0,\quad a\neq a^*_s,\end{aligned}\r. \quad
\text{where} \ a_s^* = \argmax_{a} \l(r_{sa} + \g \sum_t P^a_{st}V^*_t\r).
\end{aligned}
\r.
\end{equation}
\end{lemma}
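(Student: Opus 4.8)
The plan is to recognize \eqref{eq: optimal policy} as the Bellman optimality characterization and to prove it by a policy-improvement argument, leaning on the positivity/monotonicity of $(I-\g P)^{-1}$ from Proposition \ref{prop: transmatrix}. The first equation is essentially free: since $V^* = V^{\pi^*}$ by definition and every value function obeys its own Bellman equation, $V^{\pi^*} = (I-\g P^{\pi^*})^{-1}r^{\pi^*}$ rearranges to $V^* = r^{\pi^*} + \g P^{\pi^*}V^*$. The real content is the second equation, namely that the maximizer $\pi^*$ is greedy with respect to its own value $V^*$.

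First I would introduce the deterministic greedy policy $\t\pi$ defined by $\t\pi_{sa} = \mathds{1}_{a = a^*_s}$ with $a^*_s = \argmax_a(r_{sa} + \g\sum_t P^a_{st}V^*_t)$. By construction, at each state the averaged quantity $\sum_a \pi_{sa}(r_{sa}+\g\sum_t P^a_{st}V^*_t)$ is maximized by $\t\pi$, so $r^{\t\pi} + \g P^{\t\pi}V^* \geq r^{\pi^*}+\g P^{\pi^*}V^* = V^*$ componentwise, i.e. $(I-\g P^{\t\pi})V^* \leq r^{\t\pi}$. Applying $(I-\g P^{\t\pi})^{-1}$, which has nonnegative entries and is therefore order-preserving (this positivity is precisely \eqref{eq: ineq1} of Proposition \ref{prop: transmatrix}), yields $V^* \leq (I-\g P^{\t\pi})^{-1}r^{\t\pi} = V^{\t\pi}$. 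This is the policy-improvement step: the greedy policy dominates $\pi^*$ pointwise.

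Next I would pin down equality using the optimality of $\pi^*$ together with $\rho>0$. Since $\pi^*$ maximizes $\rho^\top V^\pi$, we have $\rho^\top V^{\pi^*} \geq \rho^\top V^{\t\pi}$; combined with the pointwise bound $V^{\t\pi}\geq V^* = V^{\pi^*}$ and the positivity of $\rho$, this chain of inequalities collapses and forces $V^{\t\pi} = V^*$ at every state. Feeding this back into the Bellman equation for $\t\pi$ gives $V^*_s = V^{\t\pi}_s = \max_a(r_{sa}+\g\sum_t P^a_{st}V^*_t)$, while the first line of \eqref{eq: optimal policy} gives $V^*_s = \sum_a \pi^*_{sa}(r_{sa}+\g\sum_t P^a_{st}V^*_t)$. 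Equating the two shows that at every state the $\pi^*$-average of the bracketed terms equals their maximum over $a$, so $\pi^*$ must place all its mass on maximizing actions.

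Finally, the strictly positive action-gap assumption makes the maximizer $a^*_s$ unique at each state, so the only admissible distribution supported on maximizers is the deterministic one $\pi^*_{sa} = \mathds{1}_{a=a^*_s}$, which is the second line of \eqref{eq: optimal policy}. I expect the main obstacle to be the logical bookkeeping in the equality step: one must carefully chain \emph{optimal in the scalar objective} $\Rightarrow$ \emph{optimal componentwise} $\Rightarrow$ \emph{tight Bellman relation} $\Rightarrow$ \emph{greedy}, where each implication leans on a different ingredient, namely $\rho>0$, the monotonicity of $(I-\g P)^{-1}$, the Bellman equation for $\t\pi$, and the action gap. The remaining manipulations are routine once the order-preserving property from Proposition \ref{prop: transmatrix} has been invoked.
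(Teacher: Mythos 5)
Your proof is correct, but it takes a genuinely different and more self-contained route than the paper. The paper's proof is a short verification: it cites the Bellman optimality equation $V^*_s = \max_a\l(r_{sa} + \g\sum_t P^a_{st}V^*_t\r)$ as a known fact from the literature, takes $\pi^*$ to be the greedy policy defined in the second line of \eqref{eq: optimal policy}, and checks in one display that this choice makes the linear Bellman equation $V^* = r^{\pi^*} + \g P^{\pi^*}V^*$ hold; the fact that the maximizer of \eqref{eq:pg} actually \emph{is} this greedy deterministic policy is left implicit. You instead derive everything from scratch with a policy-improvement argument: the greedy policy $\t\pi$ satisfies $(I-\g P^{\t\pi})V^* \leq r^{\t\pi}$, monotonicity of $(I-\g P^{\t\pi})^{-1}$ (a consequence of \eqref{eq: ineq1}, or directly of the Neumann series $\sum_k \g^k (P^{\t\pi})^k$) gives $V^{\t\pi} \geq V^*$, and then optimality of $\pi^*$ in the scalar objective together with $\rho>0$ collapses the inequality to $V^{\t\pi} = V^*$ componentwise, recovering the optimality equation and forcing any maximizer to be supported on greedy actions; the paper's standing action-gap assumption from Section \ref{sec: non_reg} then makes the maximizer unique and deterministic. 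What your approach buys is precisely the direction the paper glosses over --- that \emph{every} maximizer of \eqref{eq:pg} has the stated form --- and it makes explicit where each ingredient ($\rho>0$, monotonicity, action gap) is used; what the paper's approach buys is brevity, by outsourcing the optimality equation to standard MDP theory. One cosmetic point: when you invoke \eqref{eq: ineq1} for order preservation, note it is stated for strictly positive vectors, so either pass to the limit $c = e_j + \e\mathbf{1}$, $\e\to 0$, to get nonnegativity of the entries of the inverse, or just quote the Neumann series, which gives monotonicity immediately.
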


\begin{proof}
The maximum $V^*$ also satisfies the optimal Bellman equation as follows \cite{sutton2018reinforcement}, 
\begin{equation}\label{eq: pistar}
    V^*_s = \max_{a} \l(r_{sa} + \g  \sum_{t}P^a_{st} V^*_t\r), \quad \text{for }\forall s\in\S.
\end{equation}
For $a_s^*$ and $\pi^*$ defined in \eqref{eq: optimal policy}, the following equality holds
\[
r^{\pi^*}_s+ \g \sum_t P^{\pi^*}_{st}V^*_t =  \sum_{a}\pi^*_{sa}\l(r_{sa} + \g\sum_t P^a_{st} V^*_t\r) = \max_a \l(r_{sa} + \g\sum_t P^a_{st} V^*_t\r) = V^*_s .
\]
Hence, $V^*$ satisfies the Bellman equation $V^* = r^{\pi^*} + \g P^{\pi^*}V^*$, which completes the proof.
\end{proof}

\begin{lemma}\label{lemma: as_asstar}
For any value functions $\Vinf, V^*\in\R^{|\S|}$, let $a_s = \argmax_a \l(r_{sa} + \g\sum_t P^a_{st}\Vinf_t\r)$ and  
\\$a_s^*=\argmax_a\l(r_{sa} + \g \sum_t P^a_{st}V^*_t\r)$ be the maximizers, then there exists
\[
\e'=\min_s\l(r_{sa_s^*} + \g\sum_{t}P^{a^*_s}_{st}V^*_t-\max_{a\neq a^*_s}\l(r_{sa} + \g\sum_{t}P^{a}_{st}V^*_t\r)\r),
\]
such that as long as $|\Vinf_s - V^*_s| < \e = \e'/3$ for $\forall s$, then $a_s = a_s^*$ for $\forall s$.
\end{lemma}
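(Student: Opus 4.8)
The plan is to reduce the claim to a per-state robustness statement for the greedy action and then invoke a standard perturbation estimate. For each state $s$ and action $a$, write the one-step lookahead value $q_s^a(V) := r_{sa} + \g\sum_t P^a_{st}V_t$, so that $a_s = \argmax_a q_s^a(\Vinf)$ and $a_s^* = \argmax_a q_s^a(V^*)$. Since the argmax over a finite action set depends only on the ordering of the values $\{q_s^a(\cdot)\}_a$, it suffices to fix an arbitrary $s$ and show that $a_s^*$ remains the \emph{strict} maximizer of $a\mapsto q_s^a(\Vinf)$; the conclusion $a_s = a_s^*$ then follows state by state.

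First I would record the key perturbation bound. Because each $P^a$ is a transition matrix, its rows are nonnegative and sum to one, so for every action $a$,
\[
\lv q_s^a(\Vinf) - q_s^a(V^*) \rv = \g\lv \sum_t P^a_{st}(\Vinf_t - V^*_t)\rv \leq \g\sum_t P^a_{st}\lv \Vinf_t - V^*_t\rv \leq \g\max_t\lv \Vinf_t - V^*_t\rv < \g\e \leq \e .
\]
Thus passing from $V^*$ to $\Vinf$ perturbs every one-step lookahead value by strictly less than $\e = \e'/3$.

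Next I would combine this with the action-gap hypothesis. By the definition of $\e'$, for every $a\neq a_s^*$ we have $q_s^{a_s^*}(V^*) \geq q_s^a(V^*) + \e'$. Using the perturbation estimate as a lower bound on $q_s^{a_s^*}(\Vinf)$ and an upper bound on $q_s^a(\Vinf)$,
\[
q_s^{a_s^*}(\Vinf) - q_s^a(\Vinf) > \l(q_s^{a_s^*}(V^*) - \e\r) - \l(q_s^a(V^*) + \e\r) \geq \e' - 2\e = \frac{\e'}{3} > 0 ,
\]
for every $a\neq a_s^*$. Hence $a_s^*$ is the unique maximizer of $a\mapsto q_s^a(\Vinf)$, giving $a_s = a_s^*$; since $s$ was arbitrary, this holds for all $s\in\S$.

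The argument is a routine robustness-of-the-greedy-action estimate, so there is no serious obstacle. The only points requiring care are that $\g<1$ together with the row-stochasticity of $P^a$ are exactly what control the perturbation of the lookahead values, and that the threshold $\e'/3$ is in fact conservative — a factor of $\e'/2$ would already suffice — with the extra slack merely ensuring that a strictly positive gap survives in the final inequality.
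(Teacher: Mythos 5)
Your proof is correct and follows essentially the same argument as the paper's: bound the perturbation of each one-step lookahead value $r_{sa}+\g\sum_t P^a_{st}V_t$ by (at most) $\g\e$ using row-stochasticity of $P^a$, then use the action gap $\e'$ to conclude that the strict maximizer at $V^*$ survives the perturbation, since $\e'-2\g\e \geq \e'-2\e = \e'/3 > 0$. The only cosmetic differences are your cleaner $q_s^a$ notation and your observation that the factor $3$ is conservative (the paper keeps the factor $\g$ and gets the slightly sharper bound $\e'-\tfrac{2\g}{3}\e'$, but relies on the same positivity).
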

The above lemma tells us that when the fixed point $\Vinf$ is close to $V^*$, then $a_s$ and $a_s^*$
defined in Lemmas \ref{lemma: optimal policy} and \ref{lemma: pitoboundary} are the same.

\begin{proof}
If $|\Vinf_t - V^*_t| \leq \e $, then
\[
    V^*_t-\e \leq \Vinf_t, \quad -(V^*+\e) \leq -\Vinf_t, \quad \text{for }\forall t\in\S, 
\]
which further leads to,
\[
r_{sa_s^*} + \g\sum_{t}P^{a^*_s}_{st}(V^*_t-\e ) \leq r_{sa_s^*} + \g\sum_{t}P^{a^*_s}_{st}\Vinf_t,
\]
\[
-r_{sa'}-\g\sum_t P^{a'}_{st}(V^*+\e) \leq  -r_{sa'}-\g\sum_t P^{a'}_{st}\Vinf_t, \quad \text{for }a'\notin a^*_s.
\]
Summing the two inequality together gives,
\[
r_{sa_s^*} + \g \sum_{t}P^{a^*_s}_{st}V^*_t - \l(r_{sa'} + \g \sum_t P^{a'}_{st}V^*\r) - 2\g\e \leq r_{sa_s^*} + \g \sum_{t}P^{a^*_s}_{st}\Vinf_t- \l(r_{sa'}+\g \sum_t P^{a'}_{st}\Vinf_t\r).
\]
Since the LHS $\geq \e' - 2\g\eps = \e' - \frac{2\g}{3}\e' > 0$, one has
\[
r_{sa_s^*} + \g \sum_{t}P^{a^*_s}_{st}\Vinf_t- \l(r_{sa'}+\g \sum_t P^{a'}_{st}\Vinf_t\r) > 0, \quad \text{for }\forall a'\neq a^*_s.
\]
The above inequality implies that $a^*_s = \argmax_a \l(r_{sa} + \g\sum_t P^a_{st}\Vinf_t\r) = a_s$, which completes the proof.

\end{proof}

\begin{theorem}\label{thm: non-reg}
Let $(V^*,\pi^*)$ be the maximum and maximizer of \eqref{eq:pg}, and $\alpha$ is a positive constant s.t.
$\alpha<\frac13g(V^*, \pi^*)$. There exists a
constant $\b_0 = \frac{|\S|}{(1-\g)^2\alpha}$, such that $\forall \b>\b_0$, the fixed 
point $(\Vinf, \piinf)$ of Algorithm \ref{algo: V GD} is close to $(V^*,\pi^*)$ in the sense
that
\[
|\Vinf_s - V^*_s| \leq \alpha, \text{ for }\forall s\in\S,\quad \piinf = \pi^*,
\]
where $$g(V^*, \pi^*) = \min_{s\in\S}\l(r_{sa_s^*} + \g\sum_{t}P_{st}^{a^*_s}V^*_t-\max_{a\neq a^*_s}(r_{sa} + \g\sum_tP_{st}^{a}V^*_t)\r)$$ with $a_s^* = \argmax\l(r_{sa} + \g\sum_tP_{st}^{a}V^*_t\r)$.
\end{theorem}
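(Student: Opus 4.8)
The plan is to establish the two conclusions separately: first the sup-norm estimate $|\Vinf_s - V^*_s| \le \alpha$, and then the exact identity $\piinf = \pi^*$, the latter following from the former together with Lemma~\ref{lemma: as_asstar}. The guiding idea is that, although $\Vinf$ is defined only implicitly through the coupled fixed-point equations of Lemma~\ref{lemma: pitoboundary}, those equations force $\Vinf$ to satisfy a \emph{perturbed Bellman optimality equation}, so a standard contraction argument for the Bellman optimality operator yields the desired closeness to $V^*$.

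First I would quantify the Bellman residual at the fixed point. Specializing Lemma~\ref{lemma: around steady} to the uniform distribution $\rho_s = 1/|\S|$ (so that $\t\rho = |\S|\mathbf 1$) gives $\ell(\Vinf,\piinf) = \frac1\b (I-\g P^{\piinf})^{-\top}\mathbf 1$, and the second bound in \eqref{eq: ineq1} of Proposition~\ref{prop: transmatrix} then yields $\mathbf 0 < \ell(\Vinf,\piinf) < \frac{|\S|}{\b(1-\g)}\mathbf 1$, whence $\|\ell\|_\infty < \frac{|\S|}{\b(1-\g)}$. Next I would use the second equation of Lemma~\ref{lemma: pitoboundary}, which says $\piinf$ is \emph{greedy} with respect to $\Vinf$. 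Writing the Bellman optimality operator as $(TV)_s := \max_a(r_{sa} + \g\sum_t P^a_{st}V_t)$, greediness of $\piinf$ means $r^{\piinf}_s + \g\sum_t P^{\piinf}_{st}\Vinf_t = (T\Vinf)_s$, and hence (with $\lam=0$) the residual collapses to $\ell(\Vinf,\piinf) = \Vinf - T\Vinf$. Since $V^* = TV^*$ by \eqref{eq: pistar} and $T$ is a $\g$-contraction in the sup norm, the splitting $\|\Vinf - V^*\|_\infty \le \|\Vinf - T\Vinf\|_\infty + \g\|\Vinf - V^*\|_\infty$ gives $\|\Vinf - V^*\|_\infty \le \frac{\|\ell\|_\infty}{1-\g} < \frac{|\S|}{\b(1-\g)^2}$. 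For $\b > \b_0 = \frac{|\S|}{(1-\g)^2\alpha}$ this is strictly below $\alpha$, proving the first claim.

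For the second claim I would invoke Lemma~\ref{lemma: as_asstar}. Because $\alpha < \frac13 g(V^*,\pi^*)$ and $g(V^*,\pi^*)$ is exactly the gap $\e'$ appearing in that lemma, the bound just obtained gives $|\Vinf_s - V^*_s| < \alpha < \e'/3 = \e$ for every $s$. Lemma~\ref{lemma: as_asstar} then guarantees $a_s = a_s^*$ for all $s$, where $a_s$ and $a_s^*$ are the greedy actions for $\Vinf$ and $V^*$ defined in Lemmas~\ref{lemma: pitoboundary} and~\ref{lemma: optimal policy}. Since both $\piinf$ and $\pi^*$ are the deterministic policies supported on these greedy actions, $a_s = a_s^*$ forces $\piinf = \pi^*$.

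The main obstacle is conceptual rather than computational: one must avoid the apparent circularity that $\Vinf$ depends on $\piinf$ while $\piinf$ depends on $\Vinf$. The resolution is the observation in the second step that the self-consistency of the fixed point collapses the Bellman residual $\ell$ into the optimality-gap residual $\Vinf - T\Vinf$; this lets the contraction estimate bound $\Vinf$ against the policy-independent fixed point $V^*$ directly, without ever having to identify $\piinf$ or compute its true value function $V^{\piinf}$ in advance. The remaining care is bookkeeping: checking that the constant from \eqref{eq: ineq1} propagates to the stated threshold $\b_0$, and that the factor $1/3$ in the action-gap hypothesis matches the $\e'/3$ in Lemma~\ref{lemma: as_asstar}.
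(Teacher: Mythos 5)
Your proof is correct, but the core estimate is obtained by a genuinely different mechanism than the paper's. The paper also starts from $G_V(\Vinf,\piinf)=0$ and the greediness of $\piinf$ (Lemma \ref{lemma: pitoboundary}), but it then uses \emph{two} greediness inequalities — $r^{\piinf}+\g P^{\piinf}\Vinf \geq r^{\pi^*}+\g P^{\pi^*}\Vinf$ and $r^{\pi^*}+\g P^{\pi^*}V^* \geq r^{\piinf}+\g P^{\piinf}V^*$ — to sandwich the residual term: $(I-\g P^{\piinf})(\Vinf-V^*) \leq \frac1\b(I-\g P^{\piinf})^{-\top}\mathbf{1} \leq (I-\g P^{\pi^*})(\Vinf-V^*)$, and then inverts these linear-system inequalities with both parts of Proposition \ref{prop: transmatrix} to get the signed bound $\mathbf{0}\leq \Vinf-V^*\leq \frac{|\S|}{\b(1-\g)^2}\mathbf{1}$. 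You instead observe that greediness of $\piinf$ collapses the residual to $\ell=\Vinf-T\Vinf$ where $T$ is the Bellman optimality operator, and then invoke the standard fact that $T$ is a sup-norm $\g$-contraction with fixed point $V^*$ (equation \eqref{eq: pistar}), yielding $\ll \Vinf-V^*\rl_\infty \leq \ll\ell\rl_\infty/(1-\g) < \frac{|\S|}{\b(1-\g)^2}$. Your route is shorter and leans on textbook MDP theory; its only external ingredient — the contraction property of $T$ — is standard but is nowhere proved in the paper, whereas the paper's sandwich argument is self-contained (everything reduces to Proposition \ref{prop: transmatrix}) and yields the extra qualitative information $\Vinf \geq V^*$, i.e.\ the fixed point overestimates the optimal value, which your absolute-value bound discards. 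Both arguments conclude identically: the bound is below $\alpha < \frac13 g(V^*,\pi^*)$ for $\b>\b_0$, so Lemma \ref{lemma: as_asstar} forces the greedy actions to coincide, and since both $\piinf$ and $\pi^*$ are the corresponding deterministic greedy policies, $\piinf=\pi^*$.
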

\begin{proof}
The fixed point $(\Vinf, \piinf)$ satisfies $G_V(\Vinf, \piinf) = 0$, which gives
\begin{equation}\label{eqn: piinf}
\begin{aligned}
(I-\g P^{\piinf})\Vinf = r^{\piinf} + \frac1\b(I - \g P^{\piinf})^{-\top}\mathbf{1}. 
\end{aligned}
\end{equation}
Subtracting the value function in \eqref{eq: optimal policy} $(I - \g P^{\pi^*})V^* = r^{\pi^*}$ from the
one in \eqref{eqn: piinf} yields,
\begin{equation}\label{eq: pf_2}
    (I - \g P^{\piinf})\Vinf - r^{\piinf}  - (I - \g P^{\pi^*})V^* + r^{\pi^*} = \frac1\b(I - \g P^{\piinf})^{-\top}\mathbf{1} .
\end{equation}
By the definition of $\piinf$ and $\pi^*$ in Lemmas \ref{lemma: pitoboundary} and \ref{lemma: optimal policy}, one has 
\[
r^{\piinf} + \g P^{\piinf}\Vinf \geq r^{\pi^*} + \g P^{\pi^*}\Vinf, \quad r^{\pi^*} + \g P^{\pi^*}V^* \geq r^{\piinf} + \g P^{\piinf}V^*.
\]
Applying the above two inequalities to \eqref{eq: pf_2} yields
\[
(I-\g P^{\piinf})(\Vinf - V^*) \leq \frac1\b(I - \g P^{\piinf})^{-\top}\mathbf{1} \leq (I-\g P^{\pi^*})(\Vinf - V^*).
\]
By \eqref{eq: ineq1} of Proposition \ref{prop: transmatrix}, one has $\mathbf{0}<\frac1\b(I - \g P^{\piinf})^{-\top}\mathbf{1}
< \frac{|\S|}{\b(1-\g)}\mathbf{1}$. Therefore,
\[
(I-\g P^{\piinf})(\Vinf - V^*) \leq \frac{|\S|}{\b(1-\g)}\mathbf{1}, \quad (I-\g P^{\pi^*})(\Vinf - V^*) > \mathbf{0}.
\]
Applying \eqref{eq: ineq2} of Proposition \ref{prop: transmatrix} to the above two inequalities gives $\bf{0} \leq \Vinf - V^* \leq
\frac{|\S|}{\b(1-\g)^2}\mathbf{1}$. By Lemma \ref{lemma: as_asstar},
when $\frac{|\S|}{\b(1-\g)^2} \leq \a = \frac{1}{3}g(V^*, \pi^*)$, then $a_s = a^*_s$, which implies
$\piinf=\pi^*$.
\end{proof}

\begin{remark}\label{rmk: general rho}
For general $\rho$, one has 
\[
(I-\g P^{\piinf})\Vinf = r^{\piinf} + \frac{1}{\b} \t{\rho} \odot \l[(I-\g P^{\piinf})^{-\top}\rho\r] . 
\]
where $(\t{\rho})_s = 1/\rho_s$. The proof for Theorem \ref{thm: non-reg} can be easily extended to general $\rho$. The main difference is the bound for the second term of the RHS of the above equation.
By applying Propsition \ref{prop: transmatrix}, one can bound
\[ 
{\bf 0}\leq \frac{1}{\b}(I-\g P^\pi)^{-1}\l[ \t{\rho} \odot \l[(I-\g P^\pi)^{-\top}\rho\r] \r] \leq \frac{1/\min_s \rho_s}{\b(1-\g)^2}{\bf 1},
\]
for $\forall \pi$. Hence, Theorem \ref{thm: non-reg} still holds for general $\rho$ with $\b_0 = \frac{1/(\min_s \rho_s)}{\alpha(1-\g)^2}$.  
\end{remark}
\subsection{Fixed point for the regularized MDP}\label{sec: reg}

Recall that the regularized MDP refers to the case where $\lam >0$. The regularized optimal policy
can be written in the following two equivalent forms
\[
(\pi_\lam^*)_s = \argmax_{\pi_s\in\Delta(\A)} \l(r^\pi_s +\g \sum_tP^\pi_{st}(V^\pi_\lam)_t -
\lam\H(\pi_s)\r),
\quad (\pi_\lam^*)_{sa} \propto\exp\l(\frac1\lam\l(r_{sa} + \g\sum_tP^a_{st}(V^*_\lam)_t \r)\r).
\]
In this section, we
prove that the fixed point $\Vinf$ converges to the regularized optimal value function
$\lim_{\b\to\infty}\Vinf = V_\lam^*$ as the prefactor $\b$ converges to infinity. On the other hand,
for sufficiently large prefactor $\b>\b_0$, the fixed point $\piinf$ will be close to the
non-regularized optimal policy $\pi^*$ if the entropy constant $\lam$ is small. However, when $\lam$ is relatively large, the fixed point $\piinf$ will be close to
the regularized optimal policy $\pi^*_\lam$. For simplicity, we assume the distribution in \eqref{eq: objobj} is the uniform distribution in this section. The results can be extended to general distribution $\rho$. 

In order to prove Theorem \ref{thm: reg_2}, we first prove Lemmas \ref{lemma: ineq} and \ref{lemma: reg Vinf}. The first Lemma is about the KL-divergence of two soft-max functions, and the second one gives a lower bound and an upper bound for the difference between the local fixed point of the $V$ updates and the regularized optimal value function $V^*_\lam$. Both lemmas  will be useful in the proof of Theorem \ref{thm: reg_2}. In this section, we always assume
that the learning rate $\eta_\pi > 0$ for the policy updates is sufficiently small,
so that $\eta_\pi\b \ell_s \lam$ is always less than $1$ and larger than $0$.

\begin{lemma}\label{lemma: ineq}
If $\pi =\text{soft-max}(\th)$ and  $\mu = \text{soft-max}(\o)$ with $\th, \o\in\R^d$, then the KL divergence between the probability distribution $\pi$ and $\mu$ is $D_{\text{KL}}(\pi|\o) \leq 2\max_a |\th_a -\o_a|$.
\end{lemma}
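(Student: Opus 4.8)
The plan is to expand the KL divergence directly from the soft-max definitions, separate it into a term that is linear in the logits and a log-partition (normalizer) difference, and then bound each piece by $\max_a|\th_a - \o_a|$ independently. Throughout I would write $\d_a = \th_a - \o_a$ and $M = \max_a|\d_a|$, so that the target inequality reads $D_{\text{KL}}(\pi|\mu) \le 2M$.

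First I would record that, since $\pi_a = e^{\th_a}/\sum_b e^{\th_b}$ and $\mu_a = e^{\o_a}/\sum_b e^{\o_b}$, the log-ratio splits as $\log(\pi_a/\mu_a) = \d_a + \log\bigl(\sum_b e^{\o_b}\big/\sum_b e^{\th_b}\bigr)$, where the second summand is the action-independent normalizer difference. Averaging against $\pi$ then gives
\begin{equation*}
D_{\text{KL}}(\pi|\mu) = \sum_a \pi_a\,\d_a + \log\frac{\sum_b e^{\o_b}}{\sum_b e^{\th_b}}.
\end{equation*}
The first term is a convex average of the $\d_a$, so $\bigl|\sum_a \pi_a\d_a\bigr| \le \max_a|\d_a| = M$ because $\pi$ is a probability vector.

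For the log-partition difference I would factor $e^{\th_b} = e^{\o_b}e^{\d_b}$ and use the uniform bound $e^{-M} \le e^{\d_b} \le e^{M}$ to sandwich the partition function, namely $e^{-M}\sum_b e^{\o_b} \le \sum_b e^{\th_b} \le e^{M}\sum_b e^{\o_b}$. Inverting and taking logarithms shows that the second term lies in $[-M, M]$, hence is also at most $M$ in absolute value. Adding the two estimates yields $D_{\text{KL}}(\pi|\mu) \le 2M = 2\max_a|\th_a - \o_a|$, which is the claim.

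There is no genuine obstacle in this argument; it is a routine estimate. The only point demanding a little care is the log-sum-exp term: one must bound the normalizer difference by the uniform quantity $M$ (via the worst-case factor $e^{\pm M}$) rather than attempting to control it by the $\pi$-averaged deviation, since $\sum_b e^{\th_b}$ is weighted by the wrong measure for such an averaging argument to apply directly.
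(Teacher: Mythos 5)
Your proof is correct, and it shares the paper's starting point --- the decomposition $\log(\pi_a/\mu_a) = (\th_a-\o_a) + \log\bigl(\sum_b e^{\o_b}\big/\sum_b e^{\th_b}\bigr)$ --- but it handles the crucial log-partition term by a genuinely different argument. The paper views $f(x) = \log\sum_a e^{x_a}$ as a smooth function, applies the mean value theorem, and exploits the fact that $\nabla f$ is a probability vector, so that $|f(\th)-f(\o)| = |\langle \th-\o, \nabla f(x)\rangle| \le \max_a|\th_a-\o_a|\,\ll\nb f\rl_1 = \max_a|\th_a-\o_a|$; it then bounds each log-ratio pointwise by $2M$ and averages against $\pi$ at the very end. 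You instead average against $\pi$ first (so the logit part becomes the convex combination $\sum_a\pi_a\d_a$, trivially bounded by $M$) and control the normalizer ratio by the purely algebraic sandwich $e^{-M}\sum_b e^{\o_b} \le \sum_b e^{\th_b} \le e^{M}\sum_b e^{\o_b}$. Your route is more elementary --- no calculus, no mean value theorem --- and is arguably the cleaner write-up of this fact; the paper's route isolates a slightly more general principle, namely that log-sum-exp is $1$-Lipschitz from $(\R^d,\ll\cdot\rl_\infty)$ to $\R$, which is reusable beyond this lemma. Both yield the same constant $2$, and your closing caveat about not trying to bound the normalizer term by a $\pi$-average is sound.
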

See Appendix \ref{appendix: ineq} for the proof.

\begin{lemma}\label{lemma: reg Vinf}
For any $\pi$, the solution $V$ to $G_V(V,\pi) = 0$ with $\lam >0$ satisfies the following inequalities:
\[
    V - V^*_\lam < \frac{|\S|}{\b(1-\g)^2}\mathbf{1},
\]
and 
\begin{equation}\label{eq: V lower bd}
    [(I-\g P^{\pi_\lam^*})(V - V^*_\lam)]_s    \geq \sum_a(\pi_{sa} - (\pi^*_\lam)_{sa})(\g \sum_t P^a_{st}V_t + r_{sa} - \lam \log\pi_{sa}),
\end{equation}
where $G_V$ is defined in \eqref{eq: updates}, and $(V^*_\lam, \pi^*_\lam)$ are the maximum and
maximizer to \eqref{eq: goal}.
\end{lemma}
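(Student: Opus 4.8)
The plan is to read off from the hypothesis $G_V(V,\pi)=0$ an explicit expression for the Bellman residual $\ell$, and then to handle the two inequalities separately: the upper bound via the regularized Bellman optimality of $V^*_\lam$, and the lower bound via a direct componentwise expansion that collapses into a KL divergence.

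\emph{Step 1 (residual identity).} Exactly as in the proof of Lemma \ref{lemma: around steady}, setting $G_V(V,\pi)=0$ and using $\rho_s=1/|\S|$ gives $\ell = (I-\g P^\pi)V - r^\pi + \lam\H(\pi) = \frac1\b(I-\g P^\pi)^{-\top}\mathbf{1}$, so $\ell_s>0$ for every $s$ by \eqref{eq: ineq1}. Written componentwise this is the bridge identity
\[
V_s = \ell_s + \sum_a\pi_{sa}\l(\g\sum_t P^a_{st}V_t + r_{sa} - \lam\log\pi_{sa}\r),
\]
which I will substitute in Step 3.

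\emph{Step 2 (upper bound).} Since $(\pi^*_\lam)_{sa}\propto\exp\l(\frac1\lam(r_{sa}+\g\sum_t P^a_{st}(V^*_\lam)_t)\r)$ is the soft-max maximizer, $V^*_\lam$ obeys the regularized Bellman optimality equation, so for the particular $\pi$ at hand $r^\pi - \lam\H(\pi) \leq (I-\g P^\pi)V^*_\lam$. Subtracting this from the $G_V=0$ identity of Step 1 gives $(I-\g P^\pi)(V-V^*_\lam) \leq \ell < \frac{|\S|}{\b(1-\g)}\mathbf{1}$, where the last bound applies \eqref{eq: ineq1} to $(I-\g P^\pi)^{-\top}\mathbf{1}$. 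Then \eqref{eq: ineq2} of Proposition \ref{prop: transmatrix} yields $V - V^*_\lam < \frac{|\S|}{\b(1-\g)^2}\mathbf{1}$.

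\emph{Step 3 (lower bound), the crux.} Using the value-function identity $(I-\g P^{\pi^*_\lam})V^*_\lam = r^{\pi^*_\lam} - \lam\H(\pi^*_\lam)$, I expand the $s$-th component,
\[
[(I-\g P^{\pi^*_\lam})(V-V^*_\lam)]_s = V_s - \sum_a(\pi^*_\lam)_{sa}\l(\g\sum_t P^a_{st}V_t + r_{sa} - \lam\log(\pi^*_\lam)_{sa}\r).
\]
Substituting the bridge identity for $V_s$ and subtracting the right-hand side of \eqref{eq: V lower bd}, the terms $\sum_a\pi_{sa}(\g\sum_t P^a_{st}V_t + r_{sa} - \lam\log\pi_{sa})$ cancel and the difference reduces to
\[
\ell_s + \lam\sum_a(\pi^*_\lam)_{sa}\log\frac{(\pi^*_\lam)_{sa}}{\pi_{sa}} = \ell_s + \lam D_{\text{KL}}\l((\pi^*_\lam)_s\,\|\,\pi_s\r) \geq 0,
\]
since $\ell_s>0$ and $D_{\text{KL}}\geq 0$; this is exactly \eqref{eq: V lower bd}. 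The main obstacle is purely in the bookkeeping of the entropy terms: the $-\lam\log(\pi^*_\lam)_{sa}$ produced by expanding $V^*_\lam$ must be paired against the $-\lam\log\pi_{sa}$ sitting inside \eqref{eq: V lower bd}, and it is precisely this mismatch (rather than a cancellation) that manufactures the nonnegative KL term and closes the inequality.
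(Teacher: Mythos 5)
Your proposal is correct and follows essentially the same route as the paper's proof: both use the $G_V(V,\pi)=0$ identity together with the regularized Bellman optimality of $V^*_\lam$ plus Proposition \ref{prop: transmatrix} for the upper bound, and both reduce the lower bound to the nonnegativity of exactly the two terms $\ell_s = \frac1\b\l[(I-\g P^{\pi})^{-\top}\mathbf{1}\r]_s>0$ and $\lam D_{\text{KL}}\l((\pi^*_\lam)_s\,\|\,\pi_s\r)\geq 0$. The only difference is organizational — you substitute the componentwise ``bridge identity'' where the paper subtracts the regularized Bellman equation in matrix form and then rearranges — which yields the identical final decomposition.
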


\begin{proof}
Since $(V^*_\lam, \pi^*_\lam)$ satisfies the regularized Bellman equation $(I-\g
P^{\pi^*_\lam})V^*+\lam\H(\pi^*)=r^{\pi^*_\lam}$, subtracting it from $G_V(V, \pi) = 0$ gives
\begin{equation}\label{eq: pf_6}
  (I-\g P^{\pi})V +\lam\H(\pi)  - (I-\g P^{\pi^*_\lam})V_\lam^* - \lam\H(\pi_\lam^*) = r^{\pi} - r^{\pi^*_\lam} + \frac1\b(I-\g P^{\pi})^{-\top}\mathbf{1}.
\end{equation}
Note that the regularized optimal policy $\pi^*_\lam$ can also be represented by 
\[
(\pi_\lam^*)_s = \argmax_{\pi_s\in\Delta(\A)} (r_s^\pi + \g \sum_tP^\pi_{st}(V^*_\lam)_t - \lam\H(\pi_s)).
\]
Therefore, $r^{\pi^*_\lam} + \g P^{\pi_\lam^*}V^*_\lam - \lam\H(\pi_\lam^*) \geq r^\pi + \g P^{\pi}V^*_\lam -
\lam\H(\pi)$. Plugging it to \eqref{eq: pf_6} leads to
\[
(I-\g P^{\pi})(V - V_\lam^*) \leq  \frac1\b(I-\g P^{\pi})^{-\top}\mathbf{1}.
\]
Further, By \eqref{eq: ineq1} in Proposition \ref{prop: transmatrix}, one has $\mathbf{0}< \frac{1}{\b}(I-\g
P^\pi)^{-\top}\mathbf{1} <\frac{|\S|}{\b(1-\g)}\mathbf{1}$ for $\forall \pi$.  Therefore,
\[
(I-\g P^{\pi})(\Vinf - V^*_\lam) <\frac{|\S|}{\b(1-\g)}\mathbf{1}.
\]
Applying \eqref{eq: ineq2} in Proposition  \ref{prop: transmatrix} to the above inequality yields
\[
\Vinf - V^*_\lam <\frac{|\S|}{C(1-\g)^2}\mathbf{1}.
\]

On the other hand, \eqref{eq: pf_6} can also be written as,
\begin{equation*}
\begin{aligned}
    &(I-\g P^{\pi^*_\lam})(V - V^*_\lam) - \g (P^{\pi} - P^{\pi^*_\lam})V + \lam(\H(\pi) - \H(\pi^*_\lam)) - (r^{\pi} - r^{\pi^*_\lam})= \frac1\b(I-\g P^{\pi})^{-\top}\mathbf{1},
\end{aligned}
\end{equation*}
which is equivalent to,
\begin{equation*}
\begin{aligned}
    &[(I-\g P^{\pi_\lam^*})(V - V_\lam^*)]_s + \sum_{a}(\pi_{sa} - (\pi^*_\lam)_{sa})( - \g\sum_t P_{st}^a V_t -r_{sa}+ \lam\log\pi_{sa} )  \\
    = &\lam\sum_a(\pi^*_\lam)_{sa}(\log(\pi^*_\lam)_{sa} - \log\pi_{sa}) + \frac1\b\l[(I-\g P^{\pi})^{-\top}\mathbf{1}\r]_s.
\end{aligned}
\end{equation*}
Note that the first term of the RHS is the KL divergence of $\pi^*_\lam$ from $\pi$, so it is always
positive. The second term of the RHS is also positive by \eqref{eq: ineq1} in Proposition \ref{prop:
  transmatrix}. Therefore, the RHS of the above equation is larger than $0$, which completes the
proof of \eqref{eq: V lower bd}.
\end{proof}


\begin{theorem}\label{thm: reg_2}
For any $\e>0$, if $\b>\frac{2|\S|}{\eps(1-\g)^2}$, then the distance between the fixed point
$(\Vinf,\piinf)$ of Algorithm \ref{algo: V GD} with $\lam >0$ and the maximum and maximizer
$(V^*_\lam,\pi^*_\lam)$ of \eqref{eq: goal} can be bounded by
\[
|\Vinf_s - (V^*_\lam)_s| < \frac\eps2, \text{ for }\forall s\in\S, \quad D_{\text{KL}}(\pi^*_\lam||\piinf) \leq \frac{\e\g}{\lam}.
\]
If one further has $\eps < \frac13g(V^*,\pi^*)$
and $\lam < \frac{\eps(1-\g)}{2\log(|\A|)}$, then the fixed point $\piinf$ is close to the non-regularized
optimal policy in the sense that
\[
D_{\text{KL}}(\pi^*||\piinf) = \log\l(1+\sum_{a\neq a_s}\exp\l(-\frac{\g}{\lam}g_{sa}\r)\r),
\]
where $g(V^*,\pi^*)$ is the same value defined in Theorem \ref{thm: non-reg}, and 
\[
g_{sa} = r_{sa_s} + \g\sum_t P^{a_s}_{st}\Vinf_t - \l(r_{sa} + \g\sum_t P^a_{st}\Vinf_t \r)\l\{\begin{aligned}
=0, \quad a = a_s;\\
>0, \quad a \neq a_s,
\end{aligned}\r. \quad a_s = \argmax_a P^a_{st}\Vinf_t.
\]
\end{theorem}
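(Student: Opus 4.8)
The plan is to first pin down the exact form of the fixed-point policy $\piinf$, then sandwich $\Vinf$ between $V^*_\lam$ and $V^*$, and finally read off the two KL estimates. First I would invoke Lemma \ref{lemma: around steady}, which guarantees $\ell(\Vinf,\piinf)>\mathbf{0}$, so all three variants $G^{(i)}_{\th_{sa}}$ collapse to $\b\rho_s\ell_s[-\g\sum_t P^a_{st}\Vinf_t - r_{sa} + \lam\log\piinf_{sa}] + c_s$. For $\piinf$ to be a fixed point of the multiplicative update $\piinf_{sa}\propto\piinf_{sa}\exp(-\eta_\pi G_{\th_{sa}})$ while staying a genuinely stochastic (fully supported) policy, the bracketed term must be constant in $a$, since everything else in $G_{\th_{sa}}$ is already $a$-independent and $\ell_s>0$. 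Solving $\lam\log\piinf_{sa} = r_{sa}+\g\sum_t P^a_{st}\Vinf_t + \mathrm{const}_s$ shows
\[
\piinf_{sa}\propto\exp\l(\tfrac1\lam\l(r_{sa}+\g\sum_t P^a_{st}\Vinf_t\r)\r),
\]
i.e. $\piinf$ is precisely the regularized greedy (soft-max) policy with respect to $\Vinf$, the exact analog of the closed form for $\pi^*_\lam$ with $V^*_\lam$ replaced by $\Vinf$.

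Next I would establish the two-sided estimate $\mathbf{0}\le\Vinf-V^*_\lam<\frac{|\S|}{\b(1-\g)^2}\mathbf{1}$. The upper bound is immediate from Lemma \ref{lemma: reg Vinf}. For the lower bound I would substitute the soft-max form of $\piinf$ into the right-hand side of \eqref{eq: V lower bd}: the quantity $\g\sum_t P^a_{st}\Vinf_t + r_{sa} - \lam\log\piinf_{sa}$ equals the ($a$-independent) log-normalizer $\lam\log Z_s$, so $\sum_a(\piinf_{sa}-(\pi^*_\lam)_{sa})\lam\log Z_s$ vanishes because $\piinf$ and $\pi^*_\lam$ are both probability vectors. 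Hence $[(I-\g P^{\pi^*_\lam})(\Vinf-V^*_\lam)]_s\ge0$, and multiplying by the positive operator $(I-\g P^{\pi^*_\lam})^{-1}=\sum_{k\ge0}\g^k(P^{\pi^*_\lam})^k$ (Proposition \ref{prop: transmatrix}) yields $\Vinf\ge V^*_\lam$. Since $\b>\frac{2|\S|}{\eps(1-\g)^2}$ makes the upper bound strictly below $\eps/2$, we obtain $|\Vinf_s-(V^*_\lam)_s|<\eps/2$ for all $s$.

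The bound on $D_{\text{KL}}(\pi^*_\lam\,||\,\piinf)$ then follows from Lemma \ref{lemma: ineq}. Both policies are soft-max, with logits at $(s,a)$ equal to $\frac1\lam(r_{sa}+\g\sum_t P^a_{st}(V^*_\lam)_t)$ and $\frac1\lam(r_{sa}+\g\sum_t P^a_{st}\Vinf_t)$; their difference is $\frac{\g}{\lam}\sum_t P^a_{st}((V^*_\lam)_t-\Vinf_t)$, whose absolute value is at most $\frac{\g}{\lam}\cdot\frac\eps2$ by the previous step (using $P^a_{st}\ge0$, $\sum_t P^a_{st}=1$). Lemma \ref{lemma: ineq} then gives $D_{\text{KL}}(\pi^*_\lam\,||\,\piinf)\le 2\cdot\frac{\g\eps}{2\lam}=\frac{\eps\g}{\lam}$.

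For the final claim I would bring in the comparison between the regularized and non-regularized optimal values. The log-sum-exp sandwich $\max_a x_a\le\lam\log\sum_a e^{x_a/\lam}\le\max_a x_a+\lam\log|\A|$ identifies the two Bellman operators up to $\lam\log|\A|$, and both are $\g$-contractions, so $\mathbf{0}\le V^*_\lam-V^*\le\frac{\lam\log|\A|}{1-\g}\mathbf{1}$. With $\lam<\frac{\eps(1-\g)}{2\log|\A|}$ this is below $\eps/2$, and the triangle inequality gives $|\Vinf_s-V^*_s|<\eps<\frac13 g(V^*,\pi^*)$ for all $s$. Lemma \ref{lemma: as_asstar} then forces $a_s=\argmax_a(r_{sa}+\g\sum_t P^a_{st}\Vinf_t)$ to coincide with the non-regularized optimal action $a^*_s$, so $\pi^*$ is the point mass at $a_s$. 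Since $\pi^*$ is deterministic, $D_{\text{KL}}(\pi^*\,||\,\piinf)_s=-\log\piinf_{sa_s}$, and plugging in the soft-max normalization of $\piinf$ together with $g_{sa_s}=0$ produces the stated closed form as a log-sum of the action gaps $g_{sa}$. I expect the main obstacle to be the lower bound in the second step: it relies on recognizing that the soft-max fixed-point form makes the bracket in \eqref{eq: V lower bd} constant in $a$, so that the cross term cancels \emph{exactly} and the monotonicity $(I-\g P^{\pi^*_\lam})^{-1}\ge0$ can be invoked; without this cancellation one only recovers a one-sided estimate and the $|\Vinf_s-(V^*_\lam)_s|<\eps/2$ control fails.
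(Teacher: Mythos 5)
Your proposal is correct and follows essentially the same route as the paper's proof: deriving the soft-max form of $\piinf$ from the fixed-point condition, getting the two-sided bound $\mathbf{0}\le\Vinf-V^*_\lam<\frac{|\S|}{\b(1-\g)^2}\mathbf{1}$ via the exact cancellation in \eqref{eq: V lower bd}, applying Lemma \ref{lemma: ineq} for the first KL bound, and then combining a bound on $V^*_\lam-V^*$ with Lemma \ref{lemma: as_asstar} to identify $a_s=a^*_s$ and compute $D_{\text{KL}}(\pi^*||\piinf)$ explicitly. The only (minor) deviation is that you bound $\mathbf{0}\le V^*_\lam-V^*\le\frac{\lam\log|\A|}{1-\g}\mathbf{1}$ via the log-sum-exp sandwich of the two Bellman optimality operators and their $\g$-contraction property, whereas the paper obtains the same estimate by directly comparing $V^{\pi^*}_\lam$, $V^{\pi^*_\lam}$, and $V^{\pi^*}$ with the entropy bound $-\H(\pi^*_\lam)_s\le\log|\A|$; both are standard and equally valid.
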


\begin{remark}
From the above theorem, one can see that as $\b$ approaches infinity, the fixed point $\Vinf$
approaches the regularized optimal value function $V^*_\lam$. However, when $\lam$ is small, the
difference between the fixed point $\piinf$ and the regularized optimal policy $\pi^*_\lam$ could be amplified by $\frac{1}{\lam}$. On the other
hand, by Taylor expansion, the difference between $\piinf$ and the non-regularized optimal policy
$\pi^*$ can be approximated by
\[
D_{\text{KL}}(\pi^*||\piinf) \approx \sum_{a\neq a_s}\exp\l(-\frac{\g}{\lam}g_{sa} \r),
\]
which is close to $0$ when $\lam$ is small. 
\end{remark}

\begin{proof}
  The fixed point of the policy updates satisfies $G_{\th_{sa}}(\Vinf,\piinf) = 0$, where
  $G_{\th_{sa}}$ is defined in \eqref{eq: updates}. That is,
  \begin{equation}\label{eq:gth0}
  \b \rho_s\ell_s\l[-\g \sum_{t} P_{st}^a V_t - r_{sa} +\lam\log\pi_{sa}\r] + c_s = 0.  
  \end{equation}
  It is equivalent to
  \[
  \piinf_{sa}\propto \exp \l(\frac{1}\lam\l(r_{sa} + \g\sum_t P^a_{st}\Vinf_t\r)\r).
  \]
  Let 
  \begin{equation}\label{def of as}
      a_s = \argmax_a\l(r_{sa} +\g\sum_t P^a_{st}\Vinf_t\r),
  \end{equation}
  then $\piinf$ can be written as 
  \begin{align}\label{eq: piinf}
    &\piinf_{sa}\propto \exp\l(-\frac{\g }{\lam} g_{sa}\r),
  \end{align}
  where 
  \[
  g_{sa} =  r_{sa_s} + \g\sum_t P^{a_s}_{st}\Vinf_t - \l(r_{sa} + \g\sum_t P^a_{st}\Vinf_t \r) \l\{\begin{aligned}
  =0, \quad a = a_s;\\
  >0, \quad a \neq a_s.
  \end{aligned}\r.
  \]
  On the other hand, by the equality \eqref{eq:gth0}, one has $ \g\sum_t P_{st}^a \Vinf_t + r_{sa} -
  \lam\log\piinf_{sa} = f_s$, where $f_s$ is a value independent of $a$. Inserting the above $\piinf$ into
  the $\pi$ in \eqref{eq: V lower bd} gives,
  \[
  \begin{aligned}
    [(I-\g P^{\pi_\lam^*})(\Vinf - V^*_\lam)]_s    \geq& f_s \sum_a(\piinf_{sa} - (\pi^*_\lam)_{sa})     =0,
  \end{aligned}
  \]
  where the last equality is due to $\sum_a\piinf_{sa} = \sum_a(\pi^*_\lam)_{sa} = 1$.  Therefore,
  by \eqref{eq: ineq2} in Proposition \ref{prop: transmatrix}, one has $ \Vinf - V^*_\lam > \mathbf{0}$. Combining it with
  Lemma \ref{lemma: reg Vinf} implies
  \begin{equation}\label{eq: pf_7}
    \mathbf{0} < \Vinf - V^*_\lam < \frac{|\S|}{\b(1-\g)^2}\mathbf{1}.
  \end{equation}
  
  On the other hand, $(\pi^*_\lam)_{sa}$ can be represented by
\[
    (\pi^*_\lam)_{sa}\propto \exp \l(\frac{1}\lam\l( r_{sa} + \g\sum_t P^a_{st}(V^*_\lam)_t\r)\r).
\] 
Since for $\forall s\in\S$,
\[
\begin{aligned}
    &\max_a\lv \frac{1}\lam\l( r_{sa} + \g\sum_t P^a_{st}(V^*_\lam)_t\r) - \frac{1}\lam\l( r_{sa} + \g\sum_t P^a_{st}\Vinf_t\r) \rv \\
    = &\max_a\lv \frac{\g}\lam\sum_t P^a_{st} ((V^*_\lam)_t - \Vinf_t) \rv 
    <\frac{\g|\S|}{\lam\b(1-\g)^2},    
\end{aligned}
\]
by Lemma \ref{lemma: ineq}, one has
\begin{equation}\label{eq: pf_8}
    D_{\text{KL}}((\pi^*_\lam)_s | \piinf_s) \leq \frac{2\g|\S|}{\lam\b(1-\g)^2}.
\end{equation}
To sum up, if $\b > \frac{2|S|}{\e(1-\g)^2}$, then by \eqref{eq: pf_7} and \eqref{eq: pf_8}
\[
  \mathbf{0}<\Vinf - V^*_\lam < \frac{\e}{2}\mathbf{1}, \quad D_{\text{KL}}(\pi^*_\lam | \piinf) <\frac{\e\g}{\lam},
\]
which completes the proof for the first part of the lemma.

For the second part, note that
\[
    V^*_\lam = V^{\pi^*_\lam}_\lam \geq V^{\pi^*}_\lam = (I-\g P^{\pi^*})^{-1} r^\pi +(I-\g P^{\pi^*})^{-1} ( -  \lam H(\pi^*) )\geq V^{\pi^*} = V^*;
\]
\[
    V^*_\lam = V^{\pi^*_\lam} +(I-\g P^{\pi_\lam^*})^{-1} ( -  \lam H(\pi_\lam^*) )  \leq V^{\pi^*} + \frac{1}{1-\g}\max_s (- \lam\H(\pi^*_\lam)_s) \leq V^* + \frac{\lam}{1-\g}\log(|\A|),
\]
where one applies \eqref{eq: ineq1} in Proposition \ref{prop: transmatrix} to the second inequality on the first equation and the first inequality on the second equation. Hence, one has
\[
    \mathbf{0} < V^*_\lam - V^* \leq \frac{\lam}{1-\g}\log(|\A|)\mathbf{1}.
\]
Combining it with the inequality \eqref{eq: pf_7},  one has
\[
|\Vinf_t - V^*_t| \leq |\Vinf_t - (V^*_\lam)_t| + |(V^*_\lam)_t - V^*_t| < \frac{|\S|}{\b(1-\g)^2} + \frac{\lam}{1-\g}\log(|\A|).
\]
Therefore, when $\b>\frac{2|\S|}{\eps(1-\g)^2}$ and $\lam < \frac{\eps(1-\g)}{2\log(|\A|)}$, then
$|\Vinf_t - V^*_t| < \eps$ for all $t\in\S$. As proved in Lemma \ref{lemma: as_asstar}, when
$|\Vinf_t-V^*_t| < \eps = \frac13g(V^*,\pi^*)$ for all $t\in\S$, then $a_s = a^*_s$ with $a^*_s$ defined in Lemma \ref{lemma:
  optimal policy} and $a_s$ defined in \eqref{def of as}. By the definition of $\pi^*$ in Lemma
\ref{lemma: optimal policy} and $\piinf$ in \eqref{eq: piinf}, one has
\[
D_{\text{KL}}(\pi^*||\piinf) = \log\l(\frac{1}{\piinf_{s a_s^*}}\r) = \log\l(1+\sum_{a\neq a_s}\exp\l(-\frac{\g}{\lam}g_{sa}\r)\r),
\]
which completes the proof for the second part of the lemma.


\end{proof}

\section{Numerical Experiments}\label{sec: numerics}

This section studies the performance of the model-based and model-free algorithms numerically
(Algorithms \ref{algo: V GD} and \ref{algo: Q}). Two different MDPs, one with states embedded in the
1D space and another with states in the 2D space, are used as testing examples.  The numerical
experiments demonstrate that both non-regularized ($\lam=0$) and regularized ($\lam>0$) versions of
the proposed algorithm converge to policies close to the non-regularized optimal policy $\pi^*$.  In addition, the algorithm combined with the BFF idea solves the
double sampling problem. A comparison between the flipping method and the natural policy gradient
(NPG) method is also provided to demonstrate that the flipping method outperforms the NPG
method.

\subsection {Example 1}

Consider an MDP with a discrete state space $\S = \l\{s_k = \frac{2\pi
  k}{n}\r\}_{k=0}^{n-1}$. The transition dynamics is given by
\begin{equation}\label{MDP}
\begin{aligned}
  &\tilde{s}_{t+1} = \text{mod}\l(s_t + \frac{2\pi}{n} a_t + \sigma Z_t,\, n\r) , \quad s_{t+1} =
  \l\{
  \begin{aligned}
    &\argmin_{i\in Z, i \in [0,n-1]} \lv \tilde{s}_{t+1} - i \rv, \quad \text{if } \tilde{s}_{t+1} \in [0,n-1/2),\\
      &0,\quad \text{if } \tilde{s}_{t+1}\in [n-1/2, n),
  \end{aligned}
  \r.    
\end{aligned}
\end{equation}
where $a_t\in\A = \{\pm 1\}$ and $Z_t\sim N(0, 1)$ follows the normal distribution. The reward
function $r(s)=1+\sin(s)$. In Figures \ref{fig: fig00}-\ref{fig: fig2}, $\sigma = 0$, i.e. the dynamics is deterministic given the current state and action. In Figure \ref{fig: fig3}, $\s \neq 0$ and
hence given the current state and action the next state is stochastic.

{\bf Results of Algorithm \ref{algo: V GD}.} Here we assume 
that the transition dynamics is known. $V(s)$ is represented in the
tabular form and $\pi(s,a)$ is parameterized with the soft-max function. Since it is shown in Figure
\ref{fig: figh0} that the vanilla gradient descent results in increasing error in the initial stage, only the
clipping and flipping methods are tested here. Both the non-regularized ($\lam =
0$) and regularized ($\lam = 0.1$) method are tested.  The error $\pi_k - \pi^*$ in the $L^1$ norm is
shown in Figure \ref{fig: fig00}. In order to demonstrate the stability of the algorithm, $100$
simulations with different initializations are run for each case and the mean of $100$ simulations
is plotted in a darker color. The learning rates $\eta_V$ and $\eta_\pi$ are both set to be
$1/(4\b)$ for all cases.

First, for both the regularized and non-regularized method, the difference between $\pi_k$ and the true optimal policy $\pi^*$ approaches to $0$. Second, the prefactors that make the number of states $n = 5, 55,
105$ converge are $\b = 10, 100, 1000$, respectively. As the number of states increases, the
prefactor increases as expected, which is consistent with what we demonstrated in Theorems \ref{thm:
  non-reg} and \ref{thm: reg_2}. In addition, one finds that the flipping method decays
consistently, while the clipping method decays slowly at first and then matches the rate of the
flipping method.

\begin{figure}[h!]
    \centering
    \includegraphics[width=\linewidth]{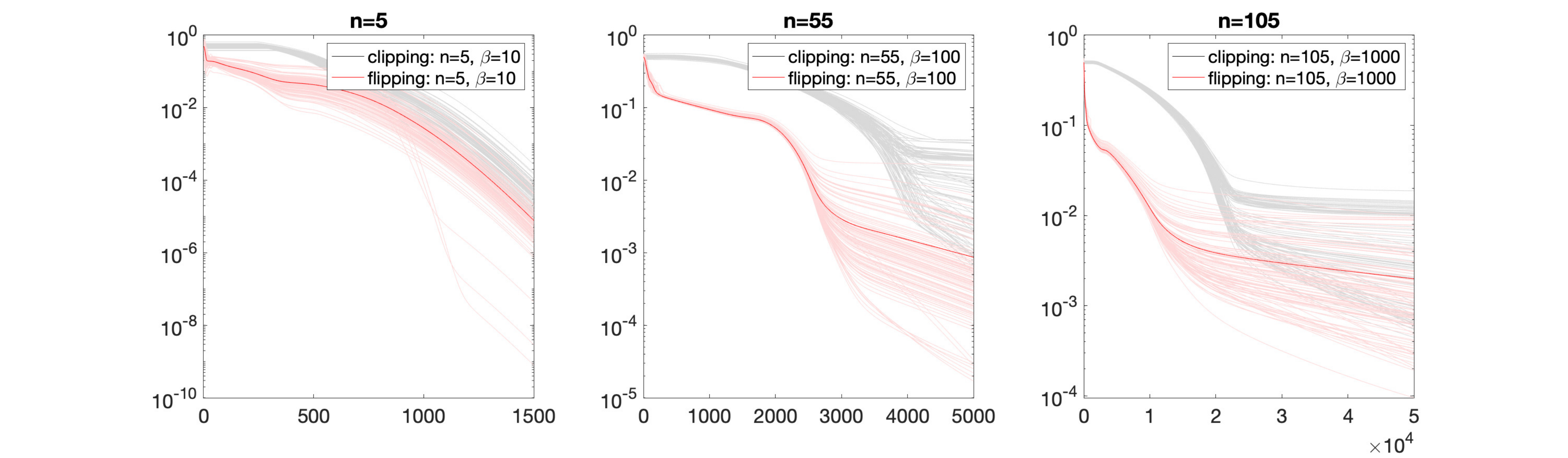}
    \includegraphics[width=\linewidth]{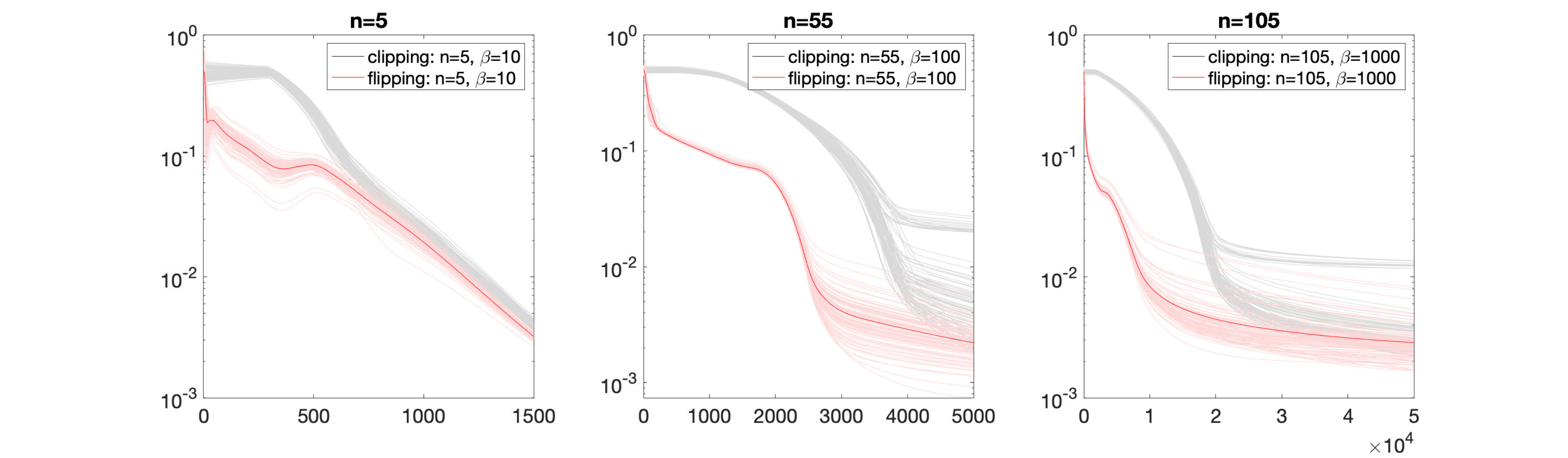}
    \caption{The plots show the error $\ll \pi_k - \pi^*\rl_{L^1}$ from Algorithm \ref{algo: V GD} for the size of the state
      space $n = 5, 55, 105$ from left to right. The first row is for the non-regularized method, while the second row is for the regularized method with $\lam = 0.1$. We run $100$
      simulations for each case and plot the mean in black and red. }
    \label{fig: fig00}
\end{figure}


{\bf Results of Algorithm \ref{algo: Q} with different prefactors.}
Here we assume that the transition dynamics is unknown.  $Q(s,a)$ is represented by the tabular form and $\pi(s,a)$ is parameterized by the soft-max
function.  Note that in this example, given $s$ and $a$, the transition dynamics is
deterministic. Therefore, one only needs to duplicate the first sample for the next state to the
second sample, namely, letting $s'_{t+1} = s_{t+1}$ in Algorithm \ref{algo: Q}. The off-policy
$\pi_b(a|s) = 1/2$ for $\forall s\in\S$ is used to generate the trajectory
$\{s_t,a_t,r_t\}_{t=0}^T$.  The error $\pi^k - \pi^*$ in the $L^1$ norm is shown in Figure \ref{fig:
  fig0}. In order to show the stability of the algorithm, $100$ simulations (with different
off-policy trajectories and different parameter initializations) are run for each case and the mean
of $100$ simulations is plotted in a darker color. To encourage exploration, we set $\lam=0.1$. The
learning rate $\eta_\pi = 4/C$, $\eta_Q = 30/C$, and the batch size $M = 1000$.

\begin{figure}[h!]
    \centering
    \includegraphics[width=0.8\linewidth]{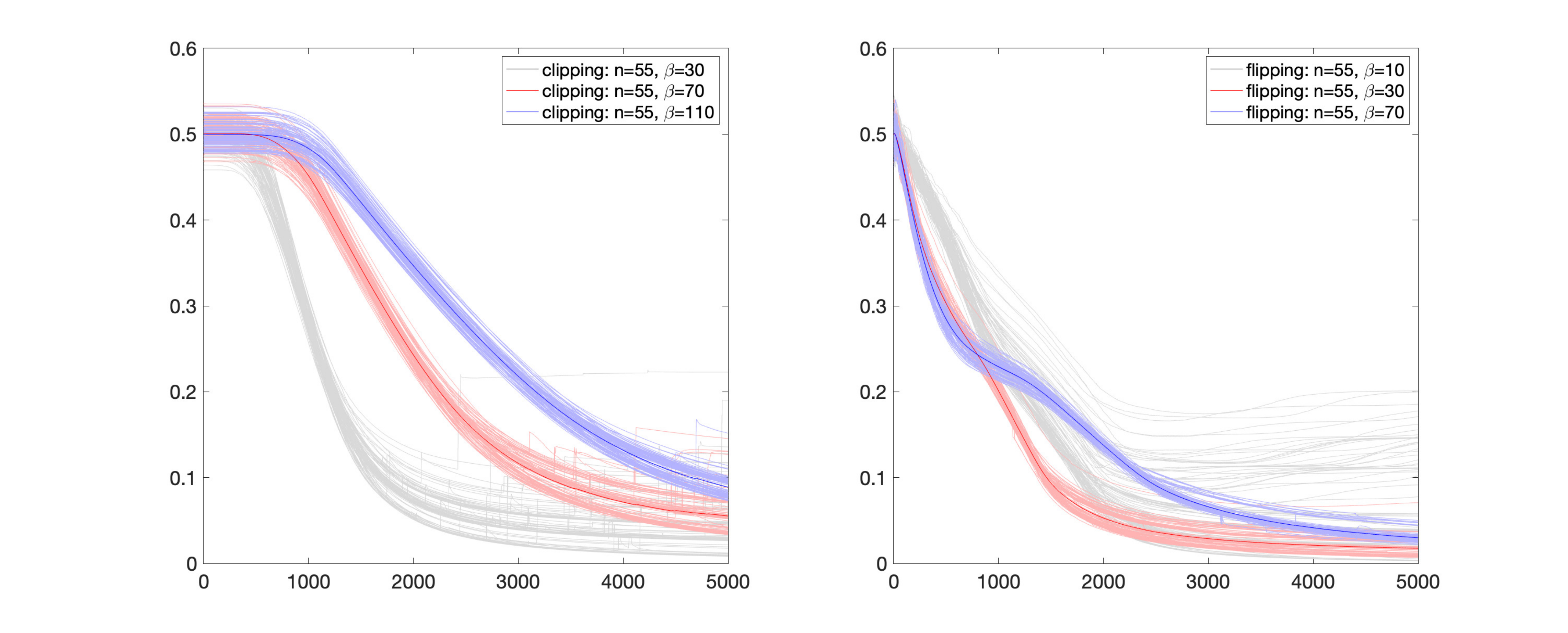}
    \caption{The plots show the error $\ll \pi_k - \pi^*\rl_{L^1}$ from Algorithm \ref{algo: Q} with $\lam = 0.1$ for
      different prefactors $\b$. The left one is the results of the clipping method , while the
      right one is the results of the flipping method. The grey, pink and blue lines represent $100$
      simulations with $\b$ from small to large. The mean of each case is plotted in black, red and
      dark blue. The error is for the non-regularized optimal policy $\pi^*$.
    }
    \label{fig: fig0}
\end{figure}

Figure \ref{fig: fig0} shows that, for both clipping and flipping methods, the probability of
reaching the optimal policy $\pi^*$ becomes larger as the prefactor $\b$ grows. Furthermore,
clipping still has several simulations diverge with $\b=70$, while all the simulations for 
flipping converge with $\b = 70$. Therefore, flipping requires a smaller $\b$ to be convergent
compared with clipping.

\begin{figure}[h!]
    \centering
    \includegraphics[width=\linewidth]{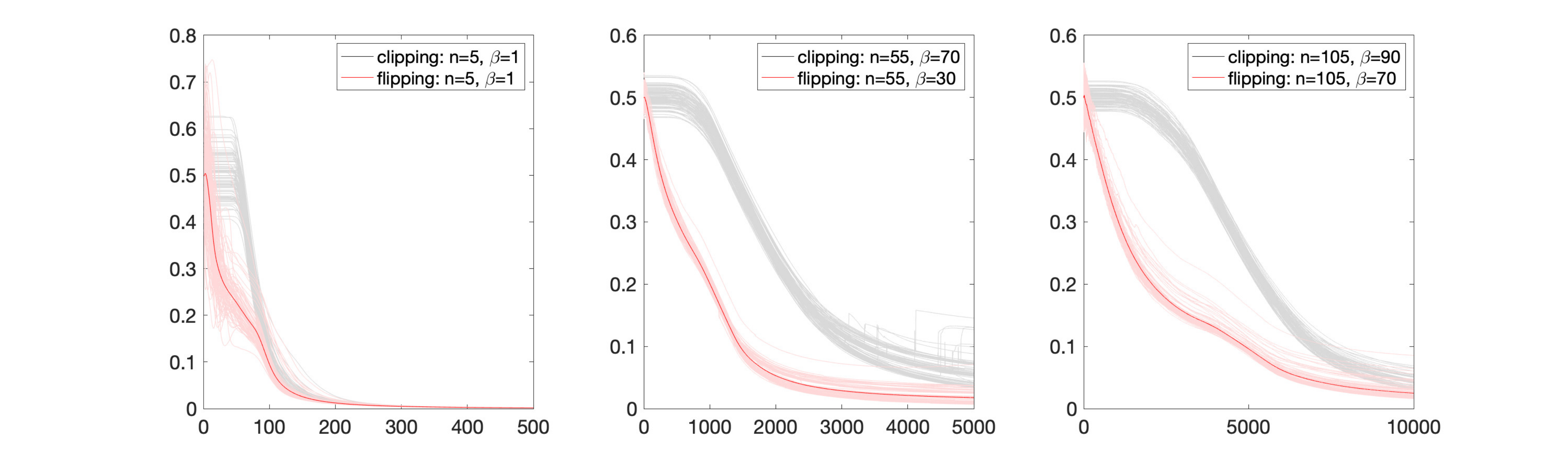}
    \caption{The plots show the error $\ll \pi_k - \pi^*\rl_{L^1}$ from Algorithm \ref{algo: Q} with $\lam = 0.1$ for the
      size of the state space $n = 5, 55, 105$ from left to right. The grey and pink lines represent
      $100$ simulations for clipping and flipping methods, respectively. The mean of each case is
      plotted in black and red. 
    }
    \label{fig: fig1}
\end{figure}

Figure \ref{fig: fig1} compares the convergence curves of clipping and flipping. Similar to Figure
\ref{fig: fig00}, the error for the clipping method decays slowly at first, while the error from
flipping decays consistently.  Comparing Figure \ref{fig: fig1} with Figure \ref{fig: fig00}, one can
see that the stochastic algorithm converges in fewer steps. The reason is that one can set the
prefactor $\b$ smaller and the learning rate larger to encourage stochasticity. 

{\bf Comparison with other methods.} Figure \ref{fig: fig2} compares the flipping method with the
natural policy gradient method (NPG) given by,
\begin{equation}\label{eq: NPG1}
  \pi^{k+1}_{sa} = \l(\pi^k_{sa}\r)^{1-\frac{\lam\eta_{\pi}}{1-\g}}\exp\l(\frac{\eta Q^{\pi^k}(s,a)}{1-\g}\r),
\end{equation}
where $Q^{\pi^k}(s,a)$ is estimated by solving the residual Bellman minimization problem
\[
\min_Q \underset{(s,a)\sim\rho}{\E}\l(Q(s,a) - \underset{s'\sim P^a(\cdot|s,a)}{\E}\l[\g\sum_a (
  Q(s',a)- \lam \log\pi(s',a))\pi(s',a) |s,a\r] - r(s,a)\r)^2.
\]
The algorithm for $Q^{\pi^k}$ updates $Q^j$ with initialization $Q^0 = Q^{\pi^{k-1}}$ and stops when
$\sum_{s,a}(Q^{j}(s,a) - Q^{j-1}(s,a))^2/n < \eps$:
\begin{equation}\label{eq: NPG}
  \begin{aligned}
    &w_t = Q^j(s_t,a_t) - r_t - \g \l(V^j(s_{t+1}) - \lam H^k(s_{t+1}) \r),\\
    &G_t(s,a) = w_t\mathds{1}_{s = s_t, a = a_t} - \g \pi(s_{t+1},a)w_t\mathds{1}_{s = s_{t+1}'},\\
    &Q^{j+1} = Q^j - \eta_Q \sum_{t=(j-1)M+1}^{jM}G_t;\quad V^{j+1}(s) = \sum_{a}Q^{j+1}(s,a)\pi^k(s,a),
  \end{aligned}
\end{equation}
where $H^k(s) = \sum_a\pi^k(s,a) \log(\pi^k(s,a))$. The batch size $M = 1000$ and regularization
constant $\lam = 0.1$ are the same for both methods. For the NPG method, we set $\eps=2\times10^{-4}, \eta_Q=4$, $\eta_\pi=0.1$ for $n=55$ and $\eps=2\times10^{-2},
\eta_Q=30$, $\eta_\pi=0.1$ for $n=55$. For the flipping method, we set $\b = 1$, $\eta_Q = 2/C,
\eta_\pi = 1/C$ for $n = 5$ and $\b = 80, (\eta_Q)_k=(\eta_\pi)_k = \min(0.999^k\frac{30}{C},
\frac{20}{C})$ for $n = 55$.

\begin{figure}[h!]
  \centering
  \includegraphics[width=0.8\linewidth]{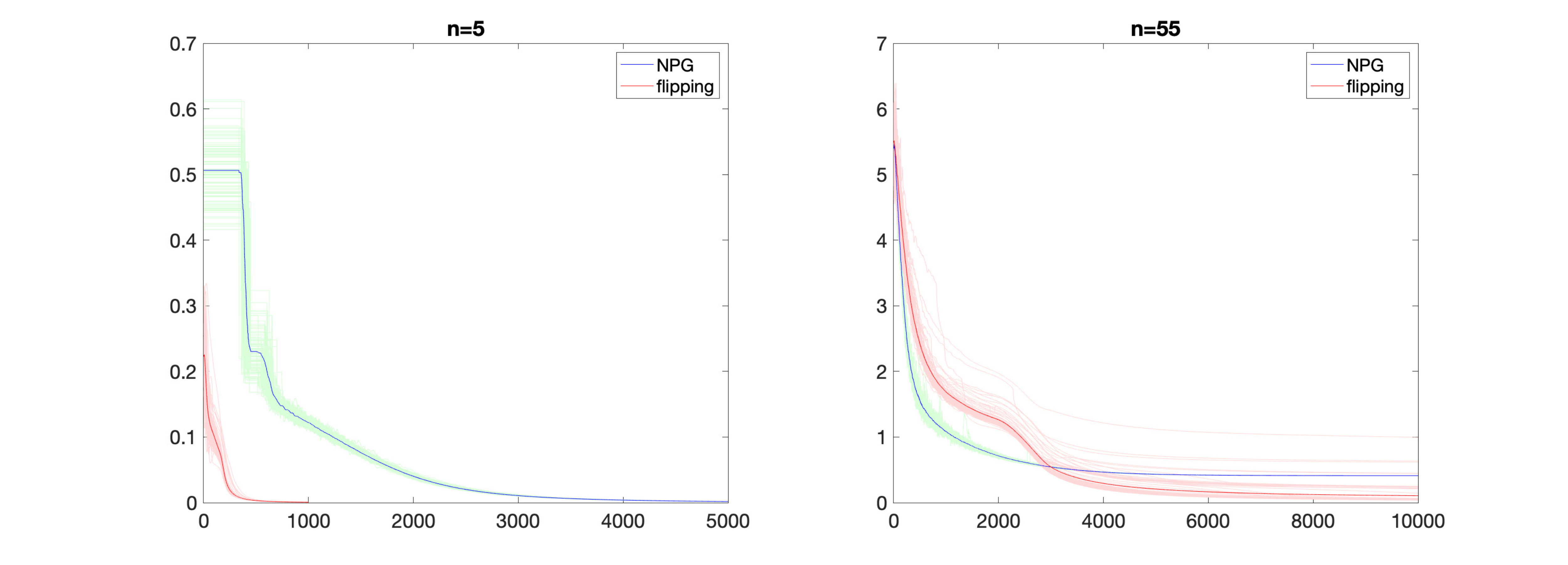}
  \caption{The plots show the comparison of the error $\ll \pi_k - \pi^*\rl_{L^1}$ between the flipping method and the NPG method \eqref{eq:
      NPG1} - \eqref{eq: NPG} for the size of the state space $n=5$ and $n= 55$. Both use the
    regularized objective function, i.e., $\lam = 0.1$. The green and pink lines represent $100$
    simulations for the NPG method and the flipping method, respectively. The mean of each case is
    plotted in blue and red. 
    }
    \label{fig: fig2}
\end{figure}

For $n = 5$, though both methods converge to the optimal policy $\pi^*$, the flipping method
converges faster than the NPG method. For $n=55$, NPG converges to the regularized optimal policy
$\pi^*_\lam$, while our method converges to a policy close to the true optimal policy $\pi^*$ with a high
probability. 

{\bf Results of  Algorithm \ref{algo: Q} with BFF.}

Here we assume the transition dynamics is stochastic given the current state and action.  We set $\sigma = 1$ for $n=5$, $\sigma = 0.5$ for $n = 55$ and $\sigma = 0.1$ for $n = 105$. Unlike Figure \ref{fig: fig1} -
\ref{fig: fig2}, BFF is used to approximate the second independent sample for the next state in Figure \ref{fig:
  fig3}. Other than that, the setting remains the same as Figure \ref{fig: fig1}. One can see that
BFF provides a good approximation for the gradient. The approximation error of the flipping method decays quickly 
for all three different cases.

\begin{figure}[h!]
    \centering
    \includegraphics[width=\linewidth]{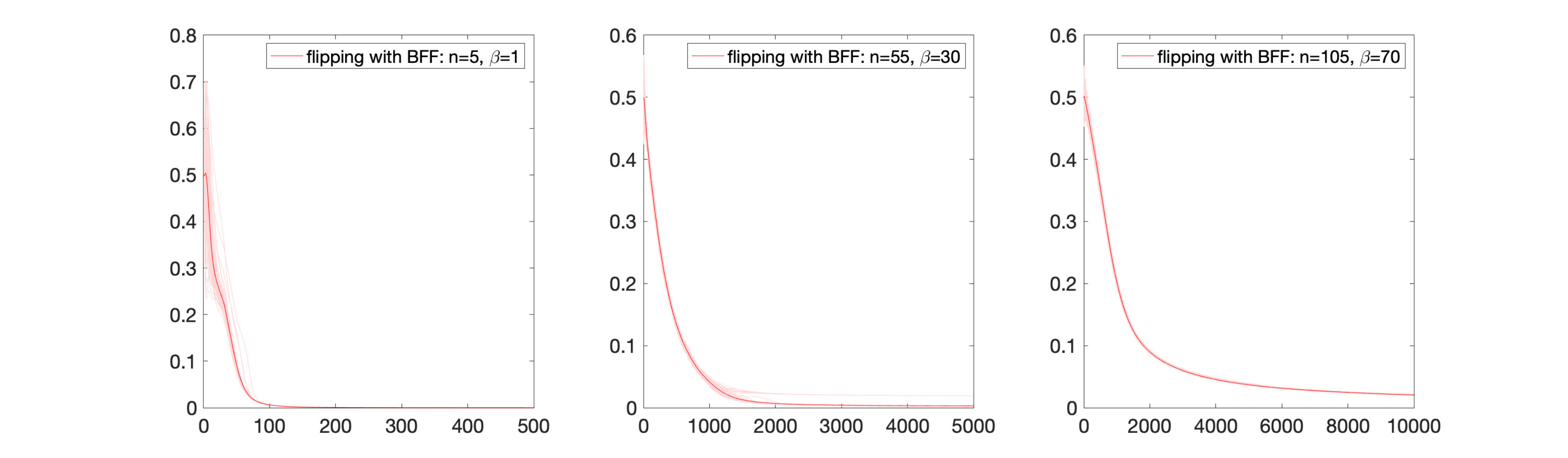}
    \caption{The plots show the error $\ll \pi_k - \pi^*\rl_{L^1}$ from the flipping method of Algorithm \ref{algo: Q} with BFF for the size of the
      state space $n = 5, 55, 105$ from left to right. The pink lines represent $100$ simulations,
      and their mean is plotted in red.}
    \label{fig: fig3}
\end{figure}

\subsection{Example 2}

Consider another MDP with a discrete state space $\S = \{s_{ij} \}_{i,j=0}^{i = n_1-1, j = n_2-1}$,
where $s_{ij} = (i,j)$ is a  two-dimensional vector. The transition dynamics is given by
\[
\begin{aligned}
    &\tilde{s}_{t+1} \gets s_t + (1+ \sigma Z_t )a_t, \\
    &(\tilde{s}_{t+1})_k \gets \text{mod}\l((\tilde{s}_{t+1})_k, n_k\r), \quad k = 1,2,\\
    &(s_{t+1})_k = 
    \l\{
    \begin{aligned}
        &\argmin_{i\in Z, i \in [0,n_k-1]} \lv (\tilde{s}_{t+1})_k - i \rv, \quad \text{if } (\tilde{s}_{t+1})_k \in [0,n_k-1/2),\\
        &0,\quad \text{if } (\tilde{s}_{t+1})_k\in [n_k-1/2, n_k),
    \end{aligned}
    \r.
\end{aligned}
\]
where $a_t\in\A = \{(\pm 1,0), (0,\pm1)\}$ and $Z_t\sim N(0, 1)$. $n_1 = n_2 = 7$, the reward is set
to be $r(s_{ij})=2 + \sin\l(\frac{2\pi i}{n_1}\r) + \cos\l(\frac{2\pi j}{n_2}\r)$, and the noise
$\s$ is set to be $0.1$.

\begin{figure}[h!]
    \centering
    \includegraphics[width=0.8\linewidth]{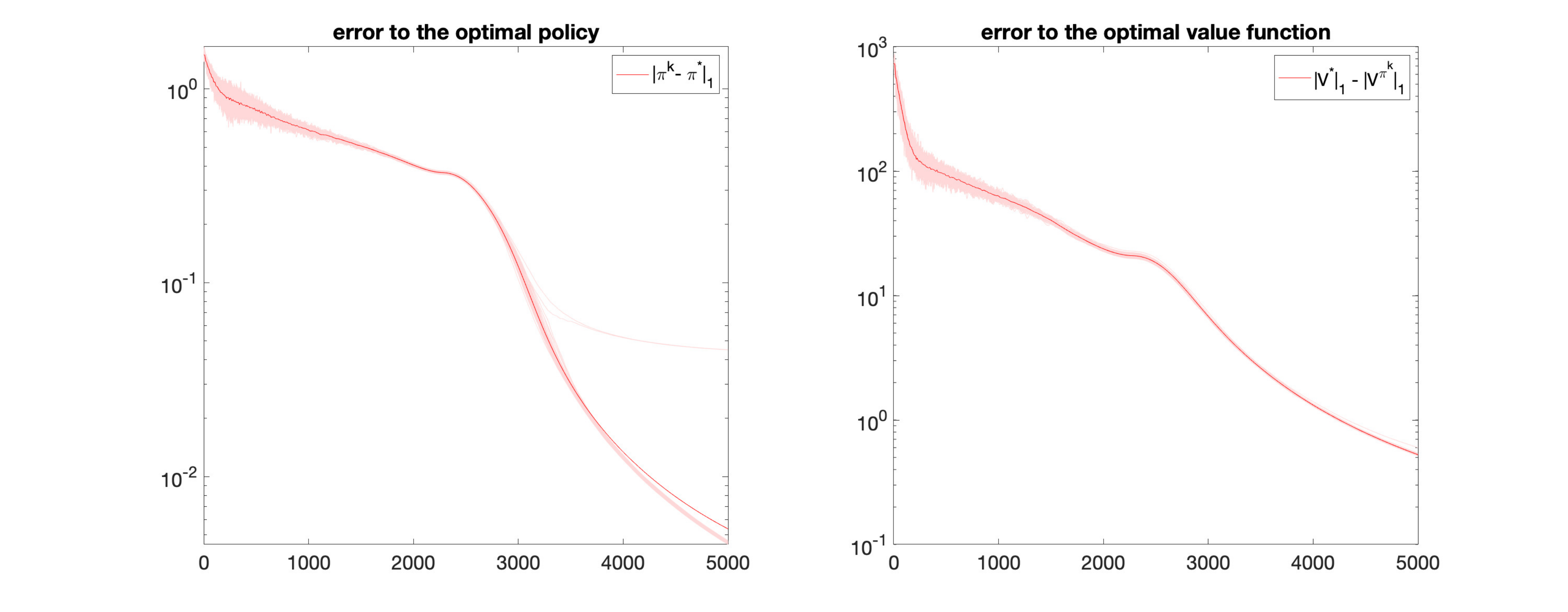}
    \caption{The above plots show the convergence of the flipping method to the optimal policy $\pi^*$ and
      optimal value function $V^*$. The pink lines represent $100$ simulations that correspond to different
      off-policy trajectories and initializations of the parameters. The mean is plotted in red.  }
    \label{fig: fig4}
\end{figure}

The result is plotted in Figure \ref{fig: fig4}. We set $\lam = 0$ for the non-regularized objective
function and use BFF to approximate the second independent sampling for the next state. The
prefactor and learning rate are set to be $\b = 30$ and $\eta_Q=\eta_\pi=\frac{30}{\b}$. The error
is plotted out in the $L_1$ norm. $98\%$ of the simulations converge to the true optimal policy
$\pi^*$. Note that the value function in the right plot of Figure \ref{fig: fig4} is the value 
function $V^{\pi^k}$ under the policy $\pi^k$, which is different from the $V^k$ in the algorithm. 
It shows that the policy indeed consistently maximizes the value function $V^{\pi}$.

\bibliographystyle{plain}

\begin{thebibliography}{10}

\bibitem{agarwal2020optimality}
Alekh Agarwal, Sham~M Kakade, Jason~D Lee, and Gaurav Mahajan.
\newblock Optimality and approximation with policy gradient methods in markov
  decision processes.
\newblock In {\em Conference on Learning Theory}, pages 64--66. PMLR, 2020.

\bibitem{an2020resampling}
Jing An, Lexing Ying, and Yuhua Zhu.
\newblock Why resampling outperforms reweighting for correcting sampling bias.
\newblock {\em International Conference on Learning Representations (ICLR)},
  2021.

\bibitem{baxter2001infinite}
Jonathan Baxter and Peter~L Bartlett.
\newblock Infinite-horizon policy-gradient estimation.
\newblock {\em Journal of Artificial Intelligence Research}, 15:319--350, 2001.

\bibitem{cen2020fast}
Shicong Cen, Chen Cheng, Yuxin Chen, Yuting Wei, and Yuejie Chi.
\newblock Fast global convergence of natural policy gradient methods with
  entropy regularization.
\newblock {\em arXiv preprint arXiv:2007.06558}, 2020.

\bibitem{degris2012model}
Thomas Degris, Patrick~M Pilarski, and Richard~S Sutton.
\newblock Model-free reinforcement learning with continuous action in practice.
\newblock In {\em 2012 American Control Conference (ACC)}, pages 2177--2182.
  IEEE, 2012.

\bibitem{fujimoto2018addressing}
Scott Fujimoto, Herke Hoof, and David Meger.
\newblock Addressing function approximation error in actor-critic methods.
\newblock In {\em International Conference on Machine Learning}, pages
  1587--1596. PMLR, 2018.

\bibitem{haarnoja2018soft}
Tuomas Haarnoja, Aurick Zhou, Pieter Abbeel, and Sergey Levine.
\newblock Soft actor-critic: Off-policy maximum entropy deep reinforcement
  learning with a stochastic actor.
\newblock In {\em International conference on machine learning}, pages
  1861--1870. PMLR, 2018.

\bibitem{islam2017reproducibility}
Riashat Islam, Peter Henderson, Maziar Gomrokchi, and Doina Precup.
\newblock Reproducibility of benchmarked deep reinforcement learning tasks for
  continuous control.
\newblock {\em arXiv preprint arXiv:1708.04133}, 2017.

\bibitem{kakade2001natural}
Sham~M Kakade.
\newblock A natural policy gradient.
\newblock {\em Advances in neural information processing systems}, 14, 2001.

\bibitem{konda2000actor}
Vijay~R Konda and John~N Tsitsiklis.
\newblock Actor-critic algorithms.
\newblock In {\em Advances in neural information processing systems}, pages
  1008--1014. Citeseer, 2000.

\bibitem{lillicrap2015continuous}
Timothy~P Lillicrap, Jonathan~J Hunt, Alexander Pritzel, Nicolas Heess, Tom
  Erez, Yuval Tassa, David Silver, and Daan Wierstra.
\newblock Continuous control with deep reinforcement learning.
\newblock {\em arXiv preprint arXiv:1509.02971}, 2015.

\bibitem{liu2020finite}
Bo~Liu, Ji~Liu, Mohammad Ghavamzadeh, Sridhar Mahadevan, and Marek Petrik.
\newblock Finite-sample analysis of proximal gradient td algorithms.
\newblock {\em arXiv preprint arXiv:2006.14364}, 2020.

\bibitem{mei2020global}
Jincheng Mei, Chenjun Xiao, Csaba Szepesvari, and Dale Schuurmans.
\newblock On the global convergence rates of softmax policy gradient methods.
\newblock In {\em International Conference on Machine Learning}, pages
  6820--6829. PMLR, 2020.

\bibitem{mnih2016asynchronous}
Volodymyr Mnih, Adria~Puigdomenech Badia, Mehdi Mirza, Alex Graves, Timothy
  Lillicrap, Tim Harley, David Silver, and Koray Kavukcuoglu.
\newblock Asynchronous methods for deep reinforcement learning.
\newblock In {\em International conference on machine learning}, pages
  1928--1937. PMLR, 2016.

\bibitem{munos2008finite}
R{\'e}mi Munos and Csaba Szepesv{\'a}ri.
\newblock Finite-time bounds for fitted value iteration.
\newblock {\em Journal of Machine Learning Research}, 9(5), 2008.

\bibitem{nachum2019algaedice}
Ofir Nachum, Bo~Dai, Ilya Kostrikov, Yinlam Chow, Lihong Li, and Dale
  Schuurmans.
\newblock Algaedice: Policy gradient from arbitrary experience.
\newblock {\em arXiv preprint arXiv:1912.02074}, 2019.

\bibitem{peters2010relative}
Jan Peters, Katharina Mulling, and Yasemin Altun.
\newblock Relative entropy policy search.
\newblock In {\em Twenty-Fourth AAAI Conference on Artificial Intelligence},
  2010.

\bibitem{peters2006policy}
Jan Peters and Stefan Schaal.
\newblock Policy gradient methods for robotics.
\newblock In {\em 2006 IEEE/RSJ International Conference on Intelligent Robots
  and Systems}, pages 2219--2225. IEEE, 2006.

\bibitem{peters2008natural}
Jan Peters and Stefan Schaal.
\newblock Natural actor-critic.
\newblock {\em Neurocomputing}, 71(7-9):1180--1190, 2008.

\bibitem{schlegel2019importance}
Matthew Schlegel, Wesley Chung, Daniel Graves, Jian Qian, and Martha White.
\newblock Importance resampling for off-policy prediction.
\newblock {\em arXiv preprint arXiv:1906.04328}, 2019.

\bibitem{schulman2015trust}
John Schulman, Sergey Levine, Pieter Abbeel, Michael Jordan, and Philipp
  Moritz.
\newblock Trust region policy optimization.
\newblock In {\em International conference on machine learning}, pages
  1889--1897. PMLR, 2015.

\bibitem{schulman2017proximal}
John Schulman, Filip Wolski, Prafulla Dhariwal, Alec Radford, and Oleg Klimov.
\newblock Proximal policy optimization algorithms.
\newblock {\em arXiv preprint arXiv:1707.06347}, 2017.

\bibitem{silver2014deterministic}
David Silver, Guy Lever, Nicolas Heess, Thomas Degris, Daan Wierstra, and
  Martin Riedmiller.
\newblock Deterministic policy gradient algorithms.
\newblock In {\em International conference on machine learning}, pages
  387--395. PMLR, 2014.

\bibitem{sutton2018reinforcement}
Richard~S Sutton and Andrew~G Barto.
\newblock {\em Reinforcement learning: An introduction}.
\newblock MIT press, 2018.

\bibitem{sutton1999policy}
Richard~S Sutton, David~A McAllester, Satinder~P Singh, Yishay Mansour, et~al.
\newblock Policy gradient methods for reinforcement learning with function
  approximation.
\newblock In {\em Advances in Neural Information Processing Systems},
  volume~99, pages 1057--1063. Citeseer, 1999.

\bibitem{wang2016sample}
Ziyu Wang, Victor Bapst, Nicolas Heess, Volodymyr Mnih, Remi Munos, Koray
  Kavukcuoglu, and Nando de~Freitas.
\newblock Sample efficient actor-critic with experience replay.
\newblock {\em arXiv preprint arXiv:1611.01224}, 2016.

\bibitem{williams1992simple}
Ronald~J Williams.
\newblock Simple statistical gradient-following algorithms for connectionist
  reinforcement learning.
\newblock {\em Machine learning}, 8(3-4):229--256, 1992.

\bibitem{williams1991function}
Ronald~J Williams and Jing Peng.
\newblock Function optimization using connectionist reinforcement learning
  algorithms.
\newblock {\em Connection Science}, 3(3):241--268, 1991.

\bibitem{wu2017scalable}
Yuhuai Wu, Elman Mansimov, Roger~B Grosse, Shun Liao, and Jimmy Ba.
\newblock Scalable trust-region method for deep reinforcement learning using
  kronecker-factored approximation.
\newblock {\em Advances in neural information processing systems},
  30:5279--5288, 2017.

\bibitem{yang2019provably}
Zhuoran Yang, Yongxin Chen, Mingyi Hong, and Zhaoran Wang.
\newblock Provably global convergence of actor-critic: A case for linear
  quadratic regulator with ergodic cost.
\newblock {\em Advances in Neural Information Processing Systems}, 2019.

\bibitem{zhu2020borrowing}
Yuhua Zhu, Zach Izzo, and Lexing Ying.
\newblock Borrowing from the future: Addressing double sampling in model-free
  control.
\newblock {\em Mathematical and Scientific Machine Learning}, pages 1099--1136,
  2022.

\end{thebibliography}

\newpage
\appendix
\section*{Appendices}
\addcontentsline{toc}{section}{Appendices}
\renewcommand{\thesubsection}{\Alph{subsection}}


\subsection{SGD Algorithm for V}\label{appendix: SGD_V}
\setcounter{equation}{0}
\setcounter{theorem}{0}
\renewcommand\theequation{A.\arabic{equation}}
\renewcommand\thetheorem{A.\arabic{theorem}}
Given a trajectory $\{(s_t,a_t,r_t)_{t=0}^T\}$, the unbiased stochastic estimate for the gradient of \eqref{eq: objobj} is
\[
\begin{aligned}
    &(G_V)_t = -\nb_\o V^k(s_t) + \b L_t \l(\nb_\o V^k(s_t) - \g \frac{\pi^k(s_t,a_t')}{\pi_b(s_t,a'_t)}\nb_\o V^k(s_{t+1}')\r);\\
    &(G_\pi^i)_t = \b \h{h}^{(i)}(L_t)\l(-\g \frac{\pi^k(s_t,a_t')}{\pi_b(s_t,a_t')}V^k(s'_{t+1})\nb_\th [\log\pi^k(s_t,a_t')] + \lam \sum_a \nb_\th\pi^k(s_t,a)(\log\pi^k(s_t,a) +1) \r);
\end{aligned}
\]
where $V^k(s) = V(s,\o_k)$, $\pi^k(s,a) = \pi(s,a,\th_k)$ and $\h{h}^{(i)}$ is defined in \eqref{def of
  hp}. Here $L_t$ is the estimates for the Bellman residual,
\[
L_t = V^k(s_t) - r_t - \g\frac{\pi^k(s_t,a_t)}{\pi_b(s_t,a_t)} V^k(s_{t+1}) + \lam
\sum_a\pi^k(s_t,a)\log\pi^k(s_t,a).
\]
$a'_t$ is a sample from $\pi_b(s_t,a)$ that is uncorrelated with $a_t$, and $s'_{t+1}$ is a sample for the next state
when action $a'_t$ is taken at state $s_t$. Here we use the BFF algorithm proposed in
\cite{zhu2020borrowing} to approximate this two samples.
\[
a'_t = a_{t+1}, \quad s'_{t+1} = s_t + (s_{t+2} - s_{t+1}). 
\]
 The stochastic algorithm for the V-formulation is summarized in Algorithm \ref{algo: V}.

\begin{algorithm}
  \caption{V-formulation}
  \label{algo: V}
  \begin{algorithmic}[1]
    \REQUIRE $\eta_V, \eta_\pi$: prefactor; $\b$: penalty constant; $M$: batch size; \\  $V(s,\o),\pi(s,a,\th)$: parametrized approximation of $V(s),\pi(s,a)$; \\
    $\{s_t,a_t,r_t\}_{t=0}^T$: trajectory generated from off-policy $\pi_b$;
    \STATE Random initialization of $\th_0,\o_0$, $k=0$
    \WHILE{$\o,\th$ do not converge}
        \STATE $j\gets 0$, $k\gets k+1$
        \FOR{$t=(k-1)M+1,\cdots,kM$}
        \STATE $s_j = s_t$
        \STATE $L_j = V(s_t,\o) - r_t - \g \tau(s_t,a_t)V(s_{t+1},\o) + \lam\H(s_t) $
        \STATE $s'_{t+1} \gets s_t + (s_{t+2} - s_{t+1})$; \quad $a'_t \gets a_{t+1}$
        \STATE $G_V^j = -\nb_\o V(s_t,\o) + \b L_t(\nb_\o V(s_t,\o) - \g\tau(s_t,a'_t) \nb_\o V(s'_{t+1},\o))$
        \STATE $G_\pi^j = \b \l(-\g\tau(s_t,a'_t)V(s'_{t+1},\o)\nb_\th\log\pi(s_t,a_t,\th) + \lam\sum_a(\log\pi(s_t,a,\th)+1)\nb_\th\pi(s_t,a,\th) \r)$
        \STATE $j \gets j+1$
        \ENDFOR
        \STATE $\ds G_V \gets \frac1M\sum_{j=1}^MG_V^j $; \quad $\o \gets \o - \eta_V G_V$
        \STATE $\h\ell_s \gets \sum_{s_j = s} L_j$
        \STATE $\ds G^{(i)}_\pi\gets \frac1M\sum_{j=1}^M\h{h}^{(i)}(L_j)G_\pi^j $; \quad  $\th \gets \th - \eta_\pi G^{(i)}_\pi$, \quad where $\h{h}^{(i)}$ is defined in \eqref{def of hp}
        \STATE $\tau(s,a) \gets \frac{\pi(s,a,\th)}{\pi_b(s,a,\th)}$; $\H(s) \gets \sum_a \pi(s,a,\th)\log\pi(s,a,\th)$
    \ENDWHILE
\end{algorithmic}
\end{algorithm}

\subsection{Fixed point of Algorithm \ref{algo: V GD} with $\lam = 0$ is not stochastic policy}\label{appendix: gthneq0}
\setcounter{equation}{0}
\setcounter{theorem}{0}
\renewcommand\theequation{B.\arabic{equation}}
\renewcommand\thetheorem{B.\arabic{theorem}}

\begin{lemma}\label{lemma: gthneq0}
Assume $|\A|>2$, the null space of $P^a - P^{a'}$ is the linear space spanned by ${\bf 1}$ for all $a\neq a'$, and the reward is not a constant, i.e., $r_{sa} \not\equiv r$. When $\beta$ is sufficiently large, then $(G_V,G_\th) \neq (0,0)$.

\end{lemma}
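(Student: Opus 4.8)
The plan is to argue by contradiction. Suppose $G_V(\Vinf,\piinf)=0$ and $G_\th(\Vinf,\piinf)=0$ for some (necessarily interior, soft-max) $\piinf$. I would show that this forces $\Vinf$ to be an $O(1/\b)$-approximate fixed point of the Bellman optimality operator at which \emph{every} action is simultaneously greedy, which is impossible for large $\b$ because the optimal action at $V^*$ is unique. First I reduce the policy condition to a statement about one-step values. Since $G_V(\Vinf,\piinf)=0$, Lemma~\ref{lemma: around steady} gives $\ell(\Vinf,\piinf)>0$, so $h^{(i)}(\ell_s)=\ell_s>0$ and all three variants of $G_\th$ coincide. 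Writing $Q_{sa}:=\g\sum_t P^a_{st}\Vinf_t+r_{sa}$, the gradient formula \eqref{eq: updates} with $\lam=0$ reads $G_{\th_{sa}}=-\b\rho_s\ell_s Q_{sa}+c_s$ with $c_s$ independent of $a$. As $\rho_s,\ell_s>0$, the equations $G_{\th_{sa}}=0$ for all $a$ force $Q_{sa}$ to be independent of $a$, i.e. $\g(P^a-P^{a'})\Vinf = r_{\cdot a'}-r_{\cdot a}$ for all $a\neq a'$. Denoting the common value $\bar Q_s=\max_a Q_{sa}=(\mathcal T\Vinf)_s$, where $\mathcal T$ is the Bellman optimality operator, one then has $\ell=\Vinf-\mathcal T\Vinf$.

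The heart of the argument is a contraction estimate. From $G_V=0$, Lemma~\ref{lemma: around steady} gives $\ell=\tfrac1\b\,\t\rho\odot[(I-\g P^{\piinf})^{-\top}\rho]$, so Proposition~\ref{prop: transmatrix} yields $\ll\ell\rl_\infty=O(1/\b)$ (for uniform $\rho$, $\ll\ell\rl_\infty\le \tfrac{|\S|}{\b(1-\g)}$). Since $\mathcal T$ is a $\g$-contraction in $\ll\cdot\rl_\infty$ with unique fixed point $V^*$, the identity $\Vinf-\mathcal T\Vinf=\ell$ and the triangle inequality $\ll\Vinf-V^*\rl_\infty\le \ll\ell\rl_\infty+\g\ll\Vinf-V^*\rl_\infty$ give $\ll\Vinf-V^*\rl_\infty\le \tfrac{1}{1-\g}\ll\ell\rl_\infty\le \tfrac{|\S|}{\b(1-\g)^2}$; hence $\Vinf\to V^*$ as $\b\to\infty$.

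Finally I invoke the action gap. By Lemma~\ref{lemma: as_asstar}, once $\ll\Vinf-V^*\rl_\infty<\tfrac13 g(V^*,\pi^*)$ the greedy action $\argmax_a Q_{sa}$ equals the unique optimal action $a^*_s$ at every state, so $Q_{s a^*_s}>\max_{a\neq a^*_s}Q_{sa}$ and $Q_{sa}$ is \emph{not} constant in $a$ — contradicting the reduction of the first paragraph. Thus for every $\b>\tfrac{3|\S|}{(1-\g)^2 g(V^*,\pi^*)}$ one cannot have $G_V=G_\th=0$, which is the claim.

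The main obstacle is that this conclusion needs a strictly positive action gap at $V^*$, and this is precisely where the structural hypotheses must enter. The requirement that $Q_{sa}$ be independent of $a$ is the linear system $\g(P^a-P^{a'})\Vinf=r_{\cdot a'}-r_{\cdot a}$; by $\mathrm{Null}(P^a-P^{a'})=\mathrm{span}\{\mathbf{1}\}$ each pair determines $\Vinf$ only up to an additive multiple of $\mathbf{1}$, and with $|\A|>2$ the remaining pairs over-determine $\Vinf$, so the system is consistent only for rewards lying in a fixed proper subspace. The delicate point I would have to settle is that the non-constant-reward hypothesis, together with this over-determination, excludes the degenerate consistent case — equivalently $r_{\cdot a}=(I-\g P^a)V^*$ for all $a$, in which every policy shares the value $V^*$ and the action gap vanishes. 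Ruling out this degeneracy is exactly what turns the contraction estimate of the second paragraph into a genuine contradiction, and I expect it to be the technically subtle step of the proof.
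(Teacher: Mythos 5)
Your argument takes a genuinely different route from the paper's, and its first three paragraphs are correct as far as they go. The paper stays entirely inside linear algebra: from $G_\th=0$ it extracts the system $\g(P^a-P^{a'})V=r^{a'}-r^a$ (its \eqref{eq: pf_4}) and then splits on the reward structure, claiming inconsistency of that system in Case \eqref{eq: pf_51}, and in Case \eqref{eq: pf_52} using the null-space hypothesis to force $V=c_1\mathbf{1}$, which together with $G_V=0$, the non-constant reward, and positivity of $(I-\g P^\pi)^{-\top}\mathbf{1}$ gives a contradiction for large $\b$. You instead use $G_V=0$ quantitatively from the outset: $\ell=\Vinf-\mathcal{T}\Vinf$ with $\ll\ell\rl_\infty\le\frac{|\S|}{\b(1-\g)}$, the $\g$-contraction of the Bellman optimality operator $\mathcal{T}$ gives $\ll\Vinf-V^*\rl_\infty\le\frac{|\S|}{\b(1-\g)^2}$, and Lemma \ref{lemma: as_asstar} plus a strictly positive action gap contradicts the constancy of $Q_{sa}$ in $a$. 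This is clean and quantitative (it reproduces the threshold scale of Theorem \ref{thm: non-reg}), but it consumes the assumption $g(V^*,\pi^*)>0$, which is a standing assumption of Section \ref{sec: non_reg} yet is not among this lemma's hypotheses, and which the paper's own proof never invokes.

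The gap you flag in your last paragraph is therefore genuine, and it cannot be closed in the way you propose: the lemma's stated hypotheses do not exclude the degenerate case. Concretely, take five states on a cycle and three actions acting as cyclic shifts by $1,2,3$ (each $P^a$ a permutation matrix, so each $P^a-P^{a'}$ has null space $\mathrm{span}\{\mathbf{1}\}$), fix a non-constant $V_0$, and set $r^a=(I-\g P^a)V_0$. These rewards are non-constant, yet $r^\pi=(I-\g P^\pi)V_0$ and hence $V^\pi=V_0$ for every $\pi$: every policy is optimal and the action gap vanishes identically. For the uniform policy (whose $P^\pi$ is doubly stochastic), uniform $\rho$, and $V=V_0+\frac{1}{\b(1-\g)^2}\mathbf{1}$, one checks that $Q_{sa}=\g\sum_tP^a_{st}V_t+r_{sa}=(V_0)_s+\frac{\g}{\b(1-\g)^2}$ is independent of $a$ and that $\ell=\frac{1}{\b(1-\g)}\mathbf{1}=\frac1\b(I-\g P^\pi)^{-\top}\mathbf{1}$, so $G_V=0$ and the policy update is stationary, for every $\b$: the lemma's conclusion fails. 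So your deferred step is not merely "technically subtle" — it is impossible from the stated hypotheses alone, and your proof becomes complete only if you import the Section \ref{sec: non_reg} action-gap assumption explicitly. Incidentally, this same example lands in the paper's Case \eqref{eq: pf_51} with both right-hand sides orthogonal to $\mathbf{1}$ and with $V_0$ a common solution of the two equations, contradicting the paper's assertion there that no common solution exists; your instinct about exactly where the difficulty sits is well placed.
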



\begin{proof}
Assume that $G_\th = 0$, then it gives
\[
\g P^aV + r^a = c , \quad \forall a.
\]
where $r^a = (r_{sa})_{s\in\S}$ is an $|\S|$-dimensional vector and $c$ is a constant vector. This is equivalent to,
\begin{equation}\label{eq: pf_4}
   \g (P^a - P^{a'})V = r^{a'} - r^{a} , \quad \forall a \neq a'. 
\end{equation}
Note that if there exists three different actions $a,a',a''$, such that $r^a - r^{a'} = r^a - r^{a''} = c\neq{\bf 0}$, then $r^a = r^{a''} = {\bf 0}$. Therefore, when $|\A|\geq3$, the value of $r^{a'} - r^{a}$ can be separated into two different cases. The first case is that 
\begin{equation}\label{eq: pf_51}
    \text{there exists three different actions } a,a',a''\in\A \text{ such that } r^a - r^{a'} \neq r^a - r^{a''} \neq {\bf 0}. 
\end{equation}
The second case is that 
\begin{equation}\label{eq: pf_52}
    \text{there exists two different actions } a,a'\in\A \text{ such that } r^a = r^{a'} = r,
\end{equation}
where $r$ is a constant vector.

Let us consider the first case where $r^a - r^{a'} \neq r^a - r^{a''}$ and both $r^a - r^{a'}$ and $r^a - r^{a''}$ are not equal to ${\bf 0}$. Let $\mN$ be the null space of $(P^a - P^{a'})$ for $\forall a\neq a'$, which is a linear space spanned by ${\bf 1}$. If the projection of $r^a - r^{a'}$ onto  $\mN$ is not equal to ${\bf 0}$, then there is no solution for $V$ in \eqref{eq: pf_4}. If the projection of $r^a - r^{a'}$ and $r^a - r^{a''}$ onto the null space $\mN$ are both equal to ${\bf 0}$, then there does not exist a vector $V$, such that $\g(P^a - P^{a'})V = r^{a'} - r^{a}$ and $\g(P^a - P^{a''})V = r^{a''} - r^{a}$. Therefore, there is no solution for \eqref{eq: pf_4}. To sum up, $G_\th \neq 0$ for the first case \eqref{eq: pf_51}.

Next, let us consider the second case where $r^a = r^{a'} = r$, then $V = c_1\mathbf{1}$ is the only solution to \eqref{eq: pf_4}. Given that $P^\pi$
is transition matrix, $P^\pi \mathbf{1} = \mathbf{1}$. Plugging it into $G_V = \mathbf{0}$ yields,
\begin{equation}\label{eq: pf_1}
-\rho + \b(I-\g P^\pi)^\top\l[\l((1-\g)c_1\mathbf{1} - r\r)\odot\rho\r] = \mathbf{0}.    
\end{equation}
Multiplying $\mathbf{1}^\top$ to \eqref{eq: pf_1} gives
\[
(1-\g)c_1 = \bar{r} + \frac{1}{\b(1-\g)},
\]
where $\bar{r} = \sum_{s} r_s\rho_s$. Plugging it back to \eqref{eq: pf_1} leads  to
\[
\bar{r} - r + \frac{1}{\b(1-\g)} = \frac{1}{\b}(I - \g P^\pi)^{-\top}\mathbf{1}.
\]
When $\beta>\frac{1}{(1-\g)(\max_sr_s - \bar{r})}$, then at least one element of the LHS is negative. However, the RHS is
always positive by Propsition \ref{prop: transmatrix}, which gives contradiction. Therefore, $(G_V, G_\th) \neq (0,0)$ for the second case \eqref{eq: pf_52}. 
\end{proof}

\subsection{Proof of Proposition \ref{prop: transmatrix}}\label{appendix: proof of prop}
\setcounter{equation}{0}
\setcounter{theorem}{0}
\renewcommand\theequation{C.\arabic{equation}}
\renewcommand\thetheorem{C.\arabic{theorem}}

\begin{proof}
Let $x = (I-\g P)^{-1}c$, and assume $x_s = \min_i x_i \leq 0$. The $s$-th component of $(I- \g P)x = c$ is 
\[
c_s = x_s - \g \sum_t P_{st}x_t \leq x_s - \g \sum_t P_{st} x_s = x_s -\g x_s \leq 0,
\] 
which contradicts with the assumption $c_s>0$ for $\forall s$. On the other hand, by letting
$x_{s'}=\max_i x_i$, the $s'$-th component of the $(I-\g P)x = c$ is
\[
(1 - \g)x_{s'}\leq x_{s'} - \g \sum_tP_{s't}x_t = c_s \leq \max_i c_i.
\]
Therefore, 
\[
x_{s'}\leq \frac{\max_i c_i}{1 - \g},
\]
which completes the proof for the first part. 

For $(I- \g P)^\top x = c$, summing over all the components that $x_s \leq 0$ yields
\begin{equation*}
\begin{aligned}
    \sum_sc_s\mathds{1}_{x_s\leq 0} =& \sum_{s} x_s\mathds{1}_{x_s\leq 0} - \g \sum_{s,t}P_{ts}x_t\mathds{1}_{x_s\leq 0} \\
    =& \sum_{s} x_s\mathds{1}_{x_s\leq 0} - \g \sum_{s,t}P_{ts}x_t\mathds{1}_{x_t\leq 0}\mathds{1}_{x_s\leq 0}- \g \sum_{s,t}P_{ts}x_t\mathds{1}_{x_t> 0}\mathds{1}_{x_s\leq 0}\\
    \leq & \sum_{s} x_s\mathds{1}_{x_s\leq 0} + \g \sum_{t}\l(\sum_{s}P_{ts}\mathds{1}_{x_s\leq 0}\r)\l(-x_t\mathds{1}_{x_t\leq 0}\r)\\
    \leq & (1-\g)\sum_{s} x_s\mathds{1}_{x_s\leq 0}  \leq 0.\\
\end{aligned}
\end{equation*}
The first inequality holds because the last term on the second line is always $\leq 0$. The second
inequality is due to $\sum_{s}P_{ts}\mathds{1}_{x_s\leq 0} \leq \sum_{s}P_{ts} =1$ for $\forall t$.
However, the LHS is always strictly larger than $0$, which gives a contradiction. Therefore all
components of $x$ are positive. On the other hand, note that
\[
(1-\g)\sum_s x_s = \mathbf{1}^\top (I-\g P)^\top x = \mathbf{1}^\top c = \sum_s c_s,
\]
and $x_s>0$, therefore, $x_s < \sum_s x_s = \frac{\sum_s c_s}{1-\g}$, which completes the proof for
the second part.

For $(I-\g P )x \leq c\mathbf{1}$, let $x_s = \max_i x_i$, then the $s$-th component of $(I-\g P )x\leq c\bf{1}$ is
\[
c\geq x_s-\g\sum_tP_{st}x_t \geq x_s - \g\sum_tP_{st}x_s = (1-\g)x_s,
\]
which leads to $(1-\g)x_s \leq c$. Therefore, $x\leq x_s \leq \frac{c}{1-\g}\mathbf{1}$.
\end{proof}

\subsection{Proof of Lemma \ref{lemma: ineq}} \label{appendix: ineq}
\setcounter{equation}{0}
\setcounter{theorem}{0}
\renewcommand\theequation{D.\arabic{equation}}
\renewcommand\thetheorem{D.\arabic{theorem}}
\begin{proof}
First note that
\begin{equation}\label{eq: pf_3}
    \log(\pi_a) - \log(\mu_a) = \th_a - \o_a - \l( \log\l(\sum_b\exp(\th_b)\r) - \log\l(\sum_b\exp(\o_b)\r) \r).
\end{equation}
Let $f(x) = \log\l(\sum_a\exp(x_a)\r)$ be a function mapping $x\in\R^{d}$ to $\R$, then 
\[
\nb_{x}f = \frac{\exp(x_a)}{\sum_b\exp(x_a)},
\]
which implies that $\ll \nb_xf(x)\rl_1 = 1$ for $\forall x\in\R^d$.
By the mean value theorem, one has $\forall \th,\o\in\R^d$,
\[
\begin{aligned}
&\lv \log\l(\sum_b\exp(\th_b)\r) - \log\l(\sum_b\exp(\o_b)\r) \rv= \lv f(\th) - f(\o) \rv = \lv \la \th - \o, \nb_xf(x)\ra \rv \\
\leq &\max_a | \th_a - \o_a | \ll \nb_xf(x)\rl_1 = \max_a | \th_a - \o_a |,    
\end{aligned}
\]
where $x$ is a convex combination of $\th$ and $\o$.
Applying the above inequality into \eqref{eq: pf_3} yields
\[
    \log(\pi_a) - \log(\mu_a) \leq 2 \max_a | \th_a - \o_a |.
\]
Therefore, 
\[
    D_{\text{KL}}(\pi|\o) = \sum_a \pi_a (\log(\pi_a) - \log(\o_a)) \leq 2b \sum_a\pi_a = 2b.
\]
\end{proof}

\end{document}